\providecommand{\editor}[1]{}   
\newtheorem{thm}{Theorem}[section]
\newtheorem{prop}{Proposition}[section]
\newtheorem{lem}{Lemma}[section]
\newtheorem{rem}{Remark}[section]
\newtheorem{dfn}{Definition}[section]
\newtheorem{assum}{Assumption}[section]
\newcommand{\UPDATE}[1]{{#1}}
\begin{document}

\title{Using Stochastic Gradient Descent to Smooth Nonconvex Functions: Analysis of Implicit Graduated Optimization}

\author{\name Naoki Sato \email naoki310303@gmail.com \\
       Meiji University\\
       \AND
       \name Hideaki Iiduka \email iiduka@cs.meiji.ac.jp \\
       Meiji University}

\maketitle

\begin{abstract}
The graduated optimization approach is a method for finding global optimal solutions for nonconvex functions by using a function smoothing operation with stochastic noise. \UPDATE{This paper makes three contributions regarding graduated optimization. First, we extend the definition of function smoothing that is traditionally achieved through convolution with Gaussian noise and characterize for the first time function smoothing with heavy-tailed noise. Second, we show that light- or heavy-tailed stochastic noise in stochastic gradient descent (SGD) has the effect of smoothing the objective function, the degree of which is determined by the learning rate, batch size, and the moment of the stochastic noise. Using this finding, we propose and analyze a new graduated optimization algorithm that varies the degree of smoothing by varying the learning rate and batch size. Third, we relax the $\sigma$-nice property, a standard but restrictive condition in the analysis of graduated optimization. Our refinement enables convergence guarantees for a broader class of non-convex functions, thereby bridging the gap between theoretical assumptions and practical optimization landscapes.}
\end{abstract}

\begin{keywords}
Graduated optimization, Nonconvex optimization, Stochastic gradient descent, Function smoothing, Heavy-tailed noise, $\sigma$-nice function%
\end{keywords}

\section{Introduction}
\label{sec:intro}
\subsection{Background}
\label{sec:1.1}
The amazing success of deep neural networks (DNN) in recent years has been based on optimization by stochastic gradient descent (SGD) \citep{Her1951Ast} and its variants, such as Adam \citep{Die2015Ame}. These methods have been widely studied for their convergence \citep{Moulines2011Non, Needell2014Sto, Benjamin2020Con, Bottou2018Opt, Kevin2020Rob, Nicolas2021Sto, Zaheer2018Ada, Fangyu2019ASu, Xian2019Ont, Dongruo2020Ont, Iiduka2022App} and stability \citep{Moritz2016Tra, Jun2016Gen, Wen2018Gen, Fen2019Con} in nonconvex optimization.

SGD updates the parameters as $\bm{x}_{t+1}:=\bm{x}_t - \eta \nabla f_{\mathcal{S}_t} (\bm{x}_t)$, where $\eta$ is the learning rate and $\nabla f_{\mathcal{S}_t}$ is the stochastic gradient estimated from the full gradient $\nabla f$ using a mini-batch $\mathcal{S}_t$. Therefore, there is only an $\bm{\omega}_t := \nabla f_{\mathcal{S}_t} (\bm{x}_t) - \nabla f(\bm{x}_t)$ difference between the search direction of SGD and the true steepest descent direction. Some studies claim that it is crucial in nonconvex optimization. For example, it has been proven that noise helps the algorithm to escape local minima \citep{Rong2015Esc, Chi2017How, Hadi2018Esc, Harsh2021Esc}, achieve better generalization \citep{Moritz2016Tra, Wen2018Gen}, and find a local minimum with a small loss value in polynomial time under some assumptions \citep{Yuchen2017AHi}. Several studies have also shown that performance can be improved by adding artificial noise to gradient descent (GD) \citep{Rong2015Esc, Zhou2019Tow, Jin2021OnN, Orvieto2022Ant}. \citet{Robert2018AnA} also suggests that noise smoothes the objective function. Here, at time $t$, let $\bm{y}_{t}$ be the parameter updated by GD and $\bm{x}_{t+1}$ be the parameter updated by SGD, i.e., 
\begin{align*}
\bm{y}_{t} := \bm{x}_t - \eta \nabla f(\bm{x}_t),\ 
\bm{x}_{t+1} := \bm{x}_t - \eta \nabla f_{\mathcal{S}_t}(\bm{x}_t).
\end{align*}
Then, the update rule for the sequence $\left\{ \bm{y}_t \right\}$ is as follows:
\begin{align} \label{eq:1}
\mathbb{E}_{\bm{\omega}_t} \left[\bm{y}_{t+1} \right] 
= \mathbb{E}_{\bm{\omega}_t} \left[ \bm{y}_t \right] - \eta \nabla \mathbb{E}_{\bm{\omega}_t} \left[f(\bm{y}_t - \eta \bm{\omega}_t) \right],
\end{align}
where $f$ is Lipschitz continuous and differentiable (The derivation of equation (\ref{eq:1}) is in Appendix \ref{sec:der}). Therefore, if we define a new function $\hat{f}(\bm{y}_t) := \mathbb{E}_{\bm{\omega}_t}[f(\bm{y}_t - \eta \bm{\omega}_t)]$, $\hat{f}$ can be smoothed by convolving $f$ with noise (see Definition \ref{dfn:fhat}, also \citep{Zhijun1996The}), and its parameters $\bm{y}_t$ can approximately be viewed as being updated by using the gradient descent to minimize $\hat{f}$. In other words, simply using SGD with a mini-batch smoothes the function to some extent and may enable escapes from local minima.

\paragraph{Graduated Optimization}\label{sec:1.1.1}
Graduated optimization is one of the global optimization methods that search for the global optimal solution of difficult multimodal optimization problems. The method generates a sequence of simplified optimization problems that gradually approach the original problem through different levels of local smoothing operations. It solves the easiest simplified problem first, as the easiest simplification should have nice properties such as convexity or strong convexity; after that, it uses that solution as the initial point for solving the second-simplest problem, then the second solution as the initial point for solving the third-simplest problem and so on, as it attempts to escape from local optimal solutions of the original problem and reach a global optimal solution.

This idea first appeared in the form of graduated non-convexity (GNC) by \citep{Andrew1987Vis} and has since been studied in the field of computer vision for many years. Similar early approaches can be found in \citep{Andrew1987Sig} and \citep{Yuille1989Ene}. Moreover, the same concept has appeared in the fields of numerical analysis \citep{Eugene1990Num} and optimization \citep{Kenneth1990Ade, Zhijun1996The}. Over the past 30 years, graduated optimization has been successfully applied to many tasks in computer vision, such as early vision \citep{Michael1996Ont}, image denoising \citep{Nikolova2010Fas}, optical flow \citep{Sun2010Sec, Thomas2011Lar}, dense correspondence of images \citep{Kim2013Def}, and robust estimation \citep{Heng2020Gra, Antonante2022Out, Peng2023Ont}. In addition, it has been applied to certain tasks in machine learning, such as semi-supervised learning \citep{Olivier2006Aco, Vikas2006Det, Olivier2008Opt}, unsupervised learning \citep{Noah2004Ann}, and ranking \citep{Olivier2010Gra}. Moreover, score-based generative models \citep{Song2019Gen, Song2021Sco} and diffusion models \citep{Dickstein2015Dee, Ho2020Den, Song2021Den, Robin2022Hig}, which are currently state-of-the-art generative models, implicitly use the techniques of graduated optimization. A comprehensive survey on the graduated optimization approach can be found in \citep{Hossein2015Ont}.

Several previous studies have theoretically analyzed the graduated optimization algorithm. \citet{Hossein2015ATh} performed the first theoretical analysis, but they did not provide a practical algorithm. \citet{Elad2016OnG} defined a family of nonconvex functions satisfying certain conditions, called $\sigma$-nice, and proposed a first-order algorithm based on graduated optimization. In addition, they studied the convergence and convergence rate of their algorithm to a global optimal solution for $\sigma$-nice functions. \citet{Iwakiri2022Sin} proposed a single-loop method that simultaneously updates the variable that defines the noise level and the parameters of the problem and analyzed its convergence. 
\citet{Li2023Sto} analyzed graduated optimization based on a special smoothing operation. Note that \citet{Duchi2012Ran} pioneered the theoretical analysis of optimizers using Gaussian smoothing operations for nonsmooth convex optimization problems. Their method of optimizing with decreasing noise level is truly a graduated optimization approach.
Recently, \citet{Sato2025exp} considered implicit graduated optimization using smoothing of the objective function by stochastic noise in SGD. 
Here, this paper refers to the graduated optimization approach with explicit smoothing operations (Definition \ref{dfn:fhat}) as ``explicit graduated optimization'' and to the graduated optimization approach with implicit smoothing operations as ``implicit graduated optimization''. 

\subsection{Motivation}
Equation (\ref{eq:1}) indicates that SGD smoothes the objective function \citep{Robert2018AnA}, but it is not clear to what extent the function is smoothed or what factors are involved in the smoothing. Therefore, we decided to clarify these aspects and identify what parameters contribute to the smoothing. Also, once it is known what parameters of SGD contribute to smoothing, an implicit graduated optimization can be achieved by varying the parameters so that the noise level is reduced gradually. Our goal was thus to construct an implicit graduated optimization framework using the smoothing properties of SGD to achieve global optimization of deep neural networks.

\subsection{Contributions}
\UPDATE{
\paragraph{Generalized Function Smoothing (Section \ref{sec:func})} Traditionally, Gaussian noise has been used for function smoothing, but the conditions the noise must satisfy for smoothing have never been discussed. We show that when the function $f$ satisfies $f(\bm{x}) = \mathcal{O}\left( \|\bm{x}\|^p \right)$ for some $p \geq 0$, light-tailed noise such as Gaussian noise can always smooth the function. Furthermore, we find that even heavy-tailed noise can smooth the function if its tail index $\alpha>0$ satisfies $\alpha > p$ (Proposition \ref{prop:heavy}). This finding is a valuable result that leads to an understanding of function smoothing by stochastic noise when the stochastic noise in SGD follows a heavy-tailed distribution, as discussed in Section \ref{sec:3.2}

\begin{figure*}[htbp]
\begin{minipage}[t]{0.99\textwidth}
\vspace*{-10pt}
\centering
\includegraphics[width=0.8\linewidth]{./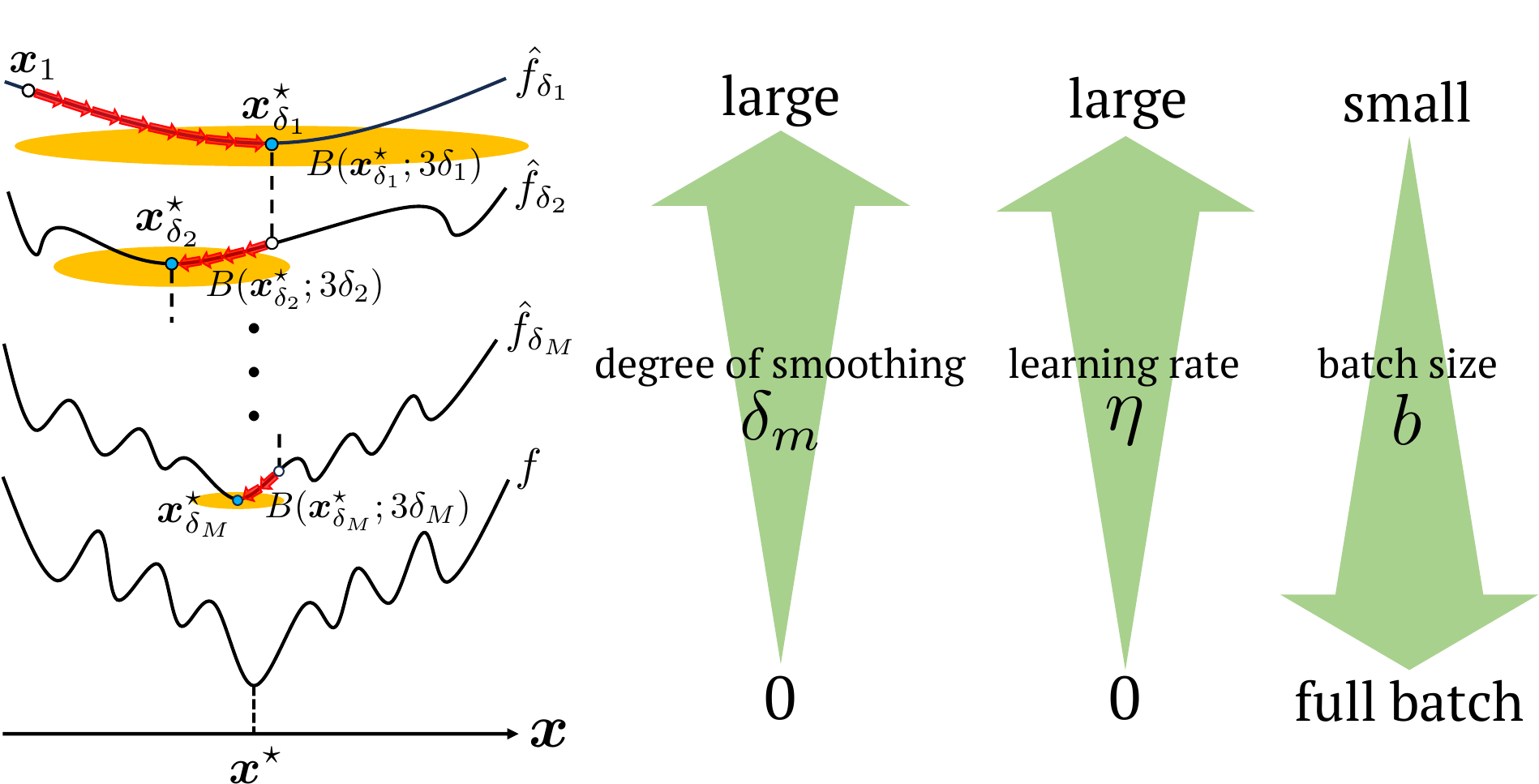}%
\end{minipage}%
\vspace*{-5pt}
\caption{Conceptual diagram of implicit graduated optimization for weakly $(\mu_1, \mu_2)$-nice function.}%
\vspace*{-10pt}
\label{fig:00}%
\end{figure*}%

\paragraph{SGD's Smoothing Property under Heavy-tailed Noise (Section \ref{sec:3.2})} We show that the degree of smoothing provided by SGD's stochastic noise depends on the quantity $\delta = \frac{\eta C_\alpha^{1/\alpha} \tau}{b^{(\alpha-1)/\alpha}}$, where $\eta$ is the learning rate, $b$ is the batch size, $\tau^\alpha$ for some $\alpha \in (1,2]$ is the upper bound of the $\alpha$-th moment of the stochastic noise (see Assumption \ref{assum:03}), and $C_\alpha \in [1,2)$ is a constant that depends only on $\alpha$ (see Lemma \ref{lem:vBE}).}
Accordingly, the smaller the batch size $b$ is and the larger the learning rate $\eta$ is, the smoother the function becomes (see Figure \ref{fig:00}). This finding provides a theoretical explanation for several experimental observations. For example, as is well known, training with a large batch size leads to poor generalization performance, as evidenced by the fact that several prior studies \citep{Hoffer2017Tra, Goyal2017Acc, You2020Lar} provided techniques that do not impair generalization performance even with large batch sizes. This is because, if we use a large batch size, the degree of smoothing $\delta$ becomes smaller and the original nonconvex function is not smoothed enough, so the sharp local minima do not disappear and the optimizer is more likely to fall into one. \citet{Shirish2017OnL} showed this experimentally, and our results provide theoretical support for it.


\UPDATE{
\paragraph{Relaxation of $\sigma$-nice Function and Implicit Graduated Optimization (Section \ref{sec:5})} Since the degree of smoothing of the objective function by stochastic noise in SGD is determined by $\delta = \frac{\eta C_\alpha^{1/\alpha} \tau}{b^{(\alpha-1)/\alpha}}$, it should be possible to construct an implicit graduated optimization algorithm by decreasing the learning rate $\eta$ and/or increasing the batch size $b$ during training (see Figure \ref{fig:00}). On the basis of this theoretical intuition, we propose a new implicit graduated optimization algorithm. Moreover, in analyzing this algorithm, we propose a weakly $(\mu_1, \mu_2)$-nice function (Definition \ref{dfn:weak}), which significantly relaxes the standard analysis tool of the $\sigma$-nice function (Definition \ref{dfn:sigma}). In contrast to the $\sigma$-nice function, this does not require local (strong) convexity for the smoothed function, allowing consideration of a broader class of nonconvex functions than the $\sigma$-nice function.}
We show that the algorithm for the weakly $(\mu_1, \mu_2)$-nice function converges to an $\epsilon$-neighborhood of the global optimal solution in $\Omega \left(1/ \epsilon^{2}\right)$ rounds (Theorem \ref{thm:3.4}).

\section{Preliminaries}
\label{sec:pre}
\paragraph{Notation and Assumptions}
Let $\mathbb{N}$ be the set of non-negative integers. For $m \in \mathbb{N} \setminus \{0\}$, define $[m] := \{1,2,\ldots,m\}$. Let $\mathbb{R}^d$ be a $d$-dimensional Euclidean space with inner product $\langle \cdot, \cdot \rangle$, which induces the norm $\| \cdot \|$. $I_d$ denotes a $d \times d$ identity matrix. $B(\bm{y}; r)$ is the Euclidean closed ball of radius $r$ centered at $\bm{y}$, i.e., $B(\bm{y};r) := \left\{ \bm{x} \in \mathbb{R}^d : \| \bm{x} - \bm{y} \| \leq r \right\}$. Let $\mathcal{N}(\bm{\mu}; \Sigma)$ be a $d$-dimensional Gaussian distribution with mean $\bm{\mu} \in \mathbb{R}^d$ and variance $\Sigma \in \mathbb{R}^{d \times d}$. The DNN is parameterized by a vector $\bm{x} \in \mathbb{R}^d$, which is optimized by minimizing the empirical loss function $f(\bm{x}) := \frac{1}{n} \sum_{i \in [n]} f_i(\bm{x})$, where $f_i(\bm{x})$ is the loss function for $\bm{x} \in \mathbb{R}^d$ and the $i$-th training data $\bm{z}_i$ $(i \in [n])$. Let $\xi$ be a random variable that does not depend on $\bm{x} \in \mathbb{R}^d$, and let $\mathbb{E}_{\xi}[X]$ denote the expectation with respect to $\xi$ of a random variable $X$. $\xi_{t,i}$ is a random variable generated from the $i$-th sampling at time $t$, and $\bm{\xi}_t := (\xi_{t,1}, \xi_{t,2}, \ldots, \xi_{t,b})^\top$ is independent of $(\bm{x}_k)_{k=0}^{t} \subset \mathbb{R}^d$ generated by SGD, where $b$ $(\leq n)$ is the batch size. The independence of $\bm{\xi}_0, \bm{\xi}_1, \ldots$ allows us to define the total expectation $\mathbb{E}$ as $\mathbb{E}=\mathbb{E}_{\bm{\xi}_0} \mathbb{E}_{\bm{\xi}_1} \cdots \mathbb{E}_{\bm{\xi}_t}$. Let $\mathsf{G}_{\xi}(\bm{x})$ be the stochastic gradient of $f(\cdot)$ at $\bm{x} \in \mathbb{R}^d$. The mini-batch $\mathcal{S}_t$ consists of $b$ samples at time $t$, and the mini-batch stochastic gradient of $f(\bm{x}_t)$ for $\mathcal{S}_t$ is defined as $\nabla f_{\mathcal{S}_t}(\bm{x}_t) := \frac{1}{b} \sum_{i \in [b]} \mathsf{G}_{\xi_{t,i}}(\bm{x}_t) = \frac{1}{b}\sum_{i \in \mathcal{S}_t} \nabla f_i (\bm{x}_t)$.

\begin{assum}\label{assum:01}
$f \colon \mathbb{R}^d \to \mathbb{R}$ is continuously differentiable and $L_g$-smooth, i.e., for all $\bm{x}, \bm{y} \in \mathbb{R}^d$, 
\begin{align*}
\| \nabla f(\bm{x}) - \nabla f(\bm{y}) \| \leq L_g \| \bm{x} - \bm{y} \|.
\end{align*}
\end{assum}

\begin{assum}\label{assum:02}
$f \colon \mathbb{R}^d \to \mathbb{R}$ is an $L_f$-Lipschitz function, i.e., for all $\bm{x}, \bm{y} \in \mathbb{R}^d$, 
\begin{align*}
\left|f(\bm{x}) - f(\bm{y}) \right| \leq L_f \| \bm{x} - \bm{y} \|.
\end{align*}
\end{assum}

\begin{assum}\label{assum:03}
{\em (i)} For all iteration $t \in \mathbb{N}$ and all index $i \in [n]$, 
\begin{align*}
\mathbb{E}_{\xi_{t,i}}\left[ \mathsf{G}_{\xi_{t,i}}(\bm{x}_t) \right] = \nabla f(\bm{x}_t).
\end{align*}

{\em (ii)} There exist $\tau \geq 0$ and $\alpha \in (1,2]$ such that, for all $t \in \mathbb{N}$ and all $i \in [n]$,
\begin{align*}
\mathbb{E}_{\xi_{t,i}}\left[ \| \mathsf{G}_{\xi_{t,i}}(\bm{x}_t) - \nabla f(\bm{x}_t) \|^\alpha \right] \leq \tau^\alpha.
\end{align*}
\end{assum}

\UPDATE{Assumption \ref{assum:03}(ii) symbolizes the analysis of optimizers under heavy-tailed noise and is widely used in previous studies \citep{Zhang2020Why, Cutkosky2021Hig, Sadiev2023Hig, Nguyen2023Imp, Chezhegov2025Cli}. When $\alpha=2$, it reduces to the standard bounded variance assumption \citep{Nemirovski2009Rob, Ghadimi2012Opt, Ghadimi2013Sto}; when $\alpha < 2$, the stochastic gradient may possess unbounded variance.}

\section{Theory of function smoothing}\label{sec:func}
\UPDATE{Classically, function smoothing is achieved by convolving the function with a random variable following a Gaussian distribution.}
\begin{dfn}
[Classical Smoothed function] \label{dfn:fhat} Given a function $f \colon \mathbb{R}^d \to \mathbb{R}$, define $\hat{f}_\delta \colon \mathbb{R}^d \to \mathbb{R}$ for all $\bm{x} \in \mathbb{R}^d$ to be the function obtained by smoothing $f$ as 
\begin{align*}
\hat{f}_\delta(\bm{x}) := \mathbb{E}_{\bm{u} \sim \mathcal{N}\left(\bm{0}, \frac{1}{d}I_d\right)} \left[f(\bm{x} - \delta \bm{u}) \right], 
\end{align*}
where $\delta > 0$ represents the degree of smoothing and $\bm{u}$ is a random variable from a Gaussian distribution $\mathcal{N}\left(\bm{0}, \frac{1}{d}I_d\right)$ with $\mathbb{E}_{\bm{u}} \left[ \| \bm{u} \| \right] \leq 1$.
\end{dfn}
\UPDATE{Function smoothing was introduced into the optimization context by \citep{Zhijun1996The} by applying Gaussian kernel blurring to images in computer vision. As its history indicates, it is highly heuristic and remains poorly understood. For example, the distribution followed by the random variables used for smoothing varies across the literature, sometimes being uniform \citep{Elad2016OnG} and other times Gaussian \citep{Zhijun1996The, Hossein2015Ont, Iwakiri2022Sin}. So what probability distribution should the random variable $\bm{u}$ follow in order to smooth the function? This has never been discussed. In this section, we examine what conditions are necessary for smoothing functions, by focusing not only on light-tailed distributions such as Gaussian and uniform distributions but also on heavy-tailed distributions. In particular, we investigate whether functions can be smoothed even for heavy-tailed distributions with a power-law decay \citep{Mert2021The}, an important subclass of heavy-tailed distributions. They are defined as $P(X>t) \sim c_0 t^{-\alpha}$ as $t \to \infty$ for some $c_0>0$ and $\alpha>0$, where $\alpha$ is known as the tail index, which determines the tail thickness of the distribution. This class includes the Pareto distribution and the $\alpha$-stable distribution ($\alpha \in (0,2)$).

We present motivated experimental results that aid in the intuitive understanding of our subsequent theory. We introduce one-dimensional versions of Rastrigin's function \citep{Torn1989Glo, Rudolph1990Glo} and the drop-wave function \citep{Molga2005Tes}, which are well-known test functions for global optimization algorithms.
\vspace*{-3pt}
\begin{align}
&\text{(Rastrigin's function)}\ \ f(x) := x^2 - 10\cos(2\pi x)+10, \label{eq:R}\\
&\text{(Drop-Wave function)}\ \ f(x) := -\frac{1 + \cos(12\pi x)}{0.5 x^2 + 2}. 
\label{eq:DW}
\end{align}
We smooth the above functions according to Definition \ref{dfn:fhat} by using random variables following light-tailed distributions: Gaussian, uniform, exponential, and Rayleigh, and heavy-tailed distributions: Pareto, Cauchy, and Levy. Figure \ref{fig:smoothed} compares Rastrigin's and drop-wave functions with their smoothed versions using a degree of smoothing $\delta = 0.5$, where the stochastic noise is drawn from either a light-tailed ((a) and (c)) or a heavy-tailed distribution ((b) and (d)). For Rastrigin's function, smoothing with a light-tailed distribution is effective, whereas smoothing with a heavy-tailed distribution fails, likely because of the presence of extremely large samples that lead to unstable function values. In contrast, for the drop-wave function, both light-tailed and heavy-tailed distributions successfully smooth the function. This robustness stems from the $x^2$ term in the denominator, which suppresses large function values and prevents instability even under heavy-tailed noise.
}
\begin{figure*}[htbp]
\vspace*{-3pt}
\begin{minipage}[t]{0.99\textwidth}
\centering
\includegraphics[width=1.0\linewidth]{./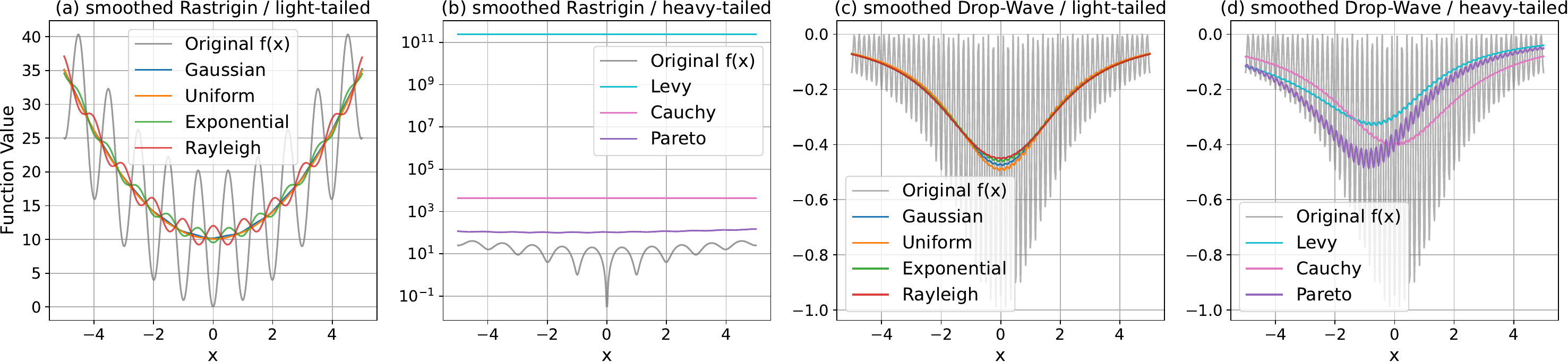}%
\end{minipage}
\vspace*{-5pt}
\caption{\UPDATE{Rastrigin and drop-wave functions with their smoothed versions under light-tailed ((a), (c)) and heavy-tailed ((b), (d)) random perturbations.}}
\vspace*{-5pt}
\label{fig:smoothed}
\end{figure*}

\UPDATE{These experimental results suggest that whether a function can be smoothed using heavy-tailed noise depends on the nature of the function itself. We elevate this observation to the following proposition. The proof of Proposition \ref{prop:heavy} is in Appendix \ref{subsec:e.1}.

\begin{prop}
\label{prop:heavy}
Assume that the function $f \colon \mathbb{R}^d \to \mathbb{R}$has the property that there exists $p \geq 0$ such that, for all $\bm{x} \in \mathbb{R}^d$, $|f(\bm{x})| = \mathcal{O}\left( \| \bm{x} \|^p \right)$. Let $\bm{x} \in \mathbb{R}^d$ and $\delta>0$. Then,
\begin{align*}
\mathbb{E}_{\bm{u}}\left[ |f(\bm{x}-\delta \bm{u})| \right] < \infty
\text{ holds for}
\begin{cases}
p \geq 0 &\text{if $\bm{u}$ follows a light-tailed distribution,} \\
p < \alpha &\text{if $\bm{u}$ follows a heavy-tailed distribution with tail index $\alpha$.}
\end{cases}
\end{align*}
\end{prop}
This proposition provides a sufficient condition for defining a smoothed function. That is, Proposition \ref{prop:heavy} implies that even if $\bm{u}$ follows a heavy-tailed distribution, the smoothed version $\hat{f}_{\delta}$ can be defined if $\alpha > p$. Therefore, we can extend the definition of function smoothing as follows.

\begin{dfn}
[Generalized Smoothed function] \label{dfn:newfhat} For a function $f \colon \mathbb{R}^d \to \mathbb{R}$ satisfying $|f(\bm{x})| = \mathcal{O}\left( \| \bm{x} \|^p \right)$ for some $p\geq0$, let $\mathcal{L}$ be either a light-tailed distribution or a heavy-tailed distribution with tail index $\alpha > p$. Then, we define $\hat{f}_\delta \colon \mathbb{R}^d \to \mathbb{R}$ to be the function obtained by smoothing $f$ as follows: 
\begin{align*}
\hat{f}_\delta(\bm{x}) := \mathbb{E}_{\bm{u} \sim \mathcal{L}} \left[f(\bm{x} - \delta \bm{u}) \right], 
\end{align*}
where $\delta > 0$ represents the degree of smoothing and $\bm{u}$ is a random variable with $\mathbb{E}_{\bm{u}} \left[ \| \bm{u} \| \right] \leq 1$.
\label{dfn:newfhat}
\end{dfn}
Smoothed functions obeying Definition \ref{dfn:newfhat} have the following well-known properties, which hold for smoothed functions obeying Definition \ref{dfn:fhat}. Crucially, these properties hold regardless of the distribution of $\bm{u}$. The proof of Lemma $\ref{lem:04}$ is in Appendix \ref{sec:C}.}

\begin{lem}\label{lem:04}
Suppose that Assumptions \ref{assum:01} and \ref{assum:02} holds; then, $\hat{f}_\delta$ defined by Definition \ref{dfn:newfhat} is also $L_g$-smooth and $L_f$-Lipschitz; i.e., for all $\bm{x}, \bm{y} \in \mathbb{R}^d$,
\begin{align*}
\left\| \nabla \hat{f}_\delta(\bm{x}) - \nabla \hat{f}_\delta(\bm{y}) \right\| 
\leq L_g \| \bm{x} - \bm{y} \|, \ \text{ and }\ 
\left|\hat{f}_\delta(\bm{x}) - \hat{f}_\delta(\bm{y}) \right| 
\leq L_f \| \bm{x} - \bm{y} \|.
\end{align*}
\end{lem}


Lemma \ref{lem:04} imply that the Lipschitz constants $L_f$ of the original function $f$ and $L_g$ of $\nabla f$ are taken over by the smoothed function $\hat{f}_\delta$ and its gradient $\nabla \hat{f}_\delta$ for all $\delta \in \mathbb{R}$. \UPDATE{Proposition \ref{prop:heavy} merely indicates the minimal conditions for defining $\hat{f}_{\delta}$. However, these lemmas guarantee that, once defined, $\hat{f}_{\delta}$ possesses at least as much smoothness as $f$, thereby establishing smoothing with heavy-tailed noise. Note that the last paragraph of Section \ref{sec:3.2} discuses whether the commonly used empirical loss function in machine learning can be smoothed even with heavy-tailed noise.}

\section{SGD's smoothing property}\label{sec:3.2}
\UPDATE{This section discusses the smoothing effect of using stochastic gradients. We assume that the original function $f$ can be smoothed even by heavy-tailed noise with tail index $\alpha$, i.e., $f(\bm{x})=\mathcal{O}\left( \| \bm{x} \|^p \right)$ for some $p\geq0$ such that $\alpha > p$. First, we introduce the lemma needed to evaluate the magnitude of the stochastic noise in SGD.
The following lemma is very important to our theory. It reduces to the well-known result, $\mathbb{E}_{\bm{\xi}_t}\left[ \| \nabla f_{\mathcal{S}_t}(\bm{x}_t) - \nabla f(\bm{x}_t) \|^2 \right] \leq \frac{\tau^2}{b}$, when $\alpha=2$, and thus constitutes an extension of that result. The proof of Lemma \ref{lem:00} is in Appendix \ref{sec:C.3}.
\begin{lem}\label{lem:00}
Suppose that Assumption \ref{assum:03} holds. Then, for all $t \in \mathbb{N}$, 
\begin{align*}
\mathbb{E}_{\bm{\xi}_t} \left[ \| \nabla f_{\mathcal{S}_t}(\bm{x}_t) - \nabla f(\bm{x}_t) \|^\alpha \right] 
\leq \frac{C_\alpha \tau^\alpha}{b^{\alpha-1}}, 
\end{align*}
where $C_\alpha$ is continuously and strictly decreasing in $\alpha \in (1,2]$ from $\displaystyle \lim_{\alpha \to1} C_\alpha = 2$ to $C_2 = 1$.
\end{lem}}

From Lemma \ref{lem:00}, we have
$
\mathbb{E}_{\xi_t} \left[ \left\| \bm{\omega}_t \right\| \right]
\leq  \frac{C_\alpha^{1/\alpha} \tau}{b^{(\alpha-1)/\alpha}},
$
due to $\bm{\omega}_t := \nabla f_{\mathcal{S}_t} (\bm{x}_t) - \nabla f(\bm{x}_t)$. The $\bm{\omega}_t$ for which this equation is satisfied can be expressed as $\bm{\omega}_t =  \frac{C_\alpha^{1/\alpha} \tau}{b^{(\alpha-1)/\alpha}}\bm{u}_t,$ where $\mathbb{E}_{\xi_t}\left[ \| \bm{u}_t \| \right] \leq 1$. In addition, for analysis, we also assume the stochastic noise $\bm{\omega}_t$ follows the same probability distribution at any given time $t$. Note that the experimental results of \citep[Figure 2(b)]{Zhang2020Why} partially justify this assumption. Therefore, $\bm{\omega}_t \sim \hat{\mathcal{L}}$ and thereby $\bm{u}_t \sim \mathcal{L}$, where $\hat{\mathcal{L}}$ and $\mathcal{L}$ are light-tailed distributions or favorable heavy-tailed distributions and $\mathcal{L}$ is a scaled version of $\hat{\mathcal{L}}$. Then, using Definition \ref{dfn:fhat}, we further transform equation (\ref{eq:1}) as follows:
\begin{align*}
\mathbb{E}_{\bm{\omega}_t} \left[\bm{y}_{t+1} \right] 
&= \mathbb{E}_{\bm{\omega}_t} \left[ \bm{y}_t \right] - \eta \nabla \mathbb{E}_{\bm{\omega}_t} \left[f(\bm{y}_t - \eta \bm{\omega}_t) \right] \nonumber \\
&= \mathbb{E}_{\bm{\omega}_t} \left[ \bm{y}_t \right] - \eta \nabla \mathbb{E}_{\bm{u}_t \sim \mathcal{L}} \left[f\left(\bm{y}_t - \frac{\eta C_\alpha^{1/\alpha} \tau}{b^{(\alpha-1)/\alpha}}\bm{u}_t\right) \right] \nonumber \\
&= \mathbb{E}_{\bm{\omega}_t} \left[ \bm{y}_t \right]- \eta \nabla \hat{f}_{\delta}(\bm{y}_t),%
\end{align*}
where $\delta := \frac{\eta C_\alpha^{1/\alpha} \tau}{b^{(\alpha-1)/\alpha}}$. In addition, from $\mathbb{E}_{\bm{\omega}_t}[\bm{x}_{t+1}] = \bm{y}_t$ (see Appendix \ref{sec:der}), we have
\begin{align*}
\mathbb{E}_{\bm{\omega}_t}[\bm{y}_t]
= \mathbb{E}_{\bm{\omega}_t}[\mathbb{E}_{\bm{\omega}_{t}}[\bm{x}_{t+1}]] 
= \mathbb{E}_{\bm{\omega}_{t}}[\bm{x}_{t+1}]
= \bm{y}_t.
\end{align*}
Therefore, we have
\begin{align}
\bm{y}_{t+1} = \boldsymbol{y}_t - \eta \nabla \hat{f}_{\delta}(\boldsymbol{y}_t) \UPDATE{+ \bm{\nu}_t}, \text{ where } \delta := \frac{\eta C_\alpha^{\frac{1}{\alpha}} \tau}{b^{\frac{\alpha-1}{\alpha}}} \text{ and } \UPDATE{\bm{\nu}_t := \bm{y}_{t+1} - \mathbb{E}_{\bm{\omega}_t}[\bm{y}_{t+1}]}. \label{eq:27}
\end{align}
\UPDATE{Note that $\mathbb{E}_{\bm{\omega}_t}[\bm{\nu}_{t}] = \bm{0}$ and $\mathbb{E}_{\bm{\omega}_t}\left[ \| \bm{\nu}_t \|^\alpha \right] \leq 2^{\alpha-1} (1+2^\alpha L_g^\alpha) \delta^\alpha$ holds (see Lemma \ref{lem:nu}) and }recall that $\bm{x}_{t+1} := \bm{x}_t - \eta \nabla f_{\mathcal{S}_t}(\bm{x}_t)$ and $\bm{y}_t := \bm{x}_t - \eta \nabla f(\bm{x}_t)$. Even though $\bm{y}_t$ is defined using the gradient of $f$, equation (\ref{eq:27}) implies that $\bm{y}_t$ can be updated by using the GD \UPDATE{with noise} to minimize $\hat{f}_{\delta}$ and optimizing the function $f$ with SGD is equivalent to optimizing the function $\hat{f}_{\delta}$ with GD \UPDATE{with noise}. Therefore, we can say that the degree of smoothing $\delta$ by the stochastic noise $\bm{\omega}_t$ in SGD is determined by the learning rate $\eta$, the batch size $b$, the $\alpha$-th moment of the stochastic noise $\tau^\alpha$, and tail index $\alpha$. Note that, when $\alpha=2$, the degree of smoothing reduces to $\delta = \eta \tau/\sqrt{b}$, making this an extension of existing results \citep{Sato2025exp}.


There are still more discoveries that can be made from the finding that simply by using SGD for optimization, the objective function is smoothed and the degree of smoothing is determined by $\delta$.

\noindent
\paragraph{1. Why the Use of Large Batch Sizes Leads to Solutions Falling into Sharp Local Minima} It is known that training with large batch sizes leads to a persistent degradation of model generalization performance. In particular, \citet{Shirish2017OnL} showed experimentally that learning with large batch sizes leads to sharp local minima and worsens generalization performance. According to equation (\ref{eq:27}), using a large learning rate and/or a small batch size will make the function smoother. Thus, in using a small batch size, the sharp local minima will disappear through extensive smoothing, and SGD can reach a flat local minimum. Conversely, when using a large batch size, the smoothing is weak and the function is close to the original multimodal function, so it is easy for the solution to fall into a sharp local minimum. Thus, we have theoretical support for what \citet{Shirish2017OnL} showed experimentally, and our experiments have yielded similar results (see Figure \ref{fig:999} (a)). 

\noindent
\paragraph{2. Why Decaying Learning Rates and Increasing Batch Sizes are Superior to Fixed Learning Rates and Batch Sizes} From equation (\ref{eq:27}), the use of a decaying learning rate or increasing batch size during training is equivalent to decreasing the noise level of the smoothed function, so using a decaying learning rate or increasing the batch size is an implicit graduated optimization. Thus, we can say that using a decaying learning rate \citep{Ilya2017SGD, Hundt2019sha, You2019How, Aitor2021How} or increasing batch size \citep{Richard2012Sam, Michael2012Hyb, Balles2017Cou, De2017Aut, Bottou2018Opt, Samuel2018Don} makes sense in terms of avoiding local minima and provides theoretical support for their experimental superiority.

\noindent
\paragraph{\UPDATE{Discussion on Smoothing Empirical Loss Function}} \UPDATE{In this section, we assumed that the growth order $p > 0$ of the original function $f$ satisfies $\alpha > p$ for the tail index $\alpha$ of heavy-tailed noise. Does the empirical loss function defined by the output from deep neural networks and the true labels satisfy this condition? From several prior studies on homogeneity \citep{Neyshabur2015Pat, Du2018Alg, Lyu2020Gra}, we can show that in the case of deep fully-connected networks or CNNs with ReLU or LeakyReLU activations, with all bias terms removed, the growth order $p$ of the cross-entropy loss is equal to the number of layers (see Appendix \ref{sec:homo}). Therefore, when using sufficiently deep models, this condition $\alpha > p$ can be said to be theoretically violated. On the other hand, according to \citep{Wan2021Sph}, when commonly used normalization techniques like batch normalization or group normalization are incorporated into the model, the parameter exhibits scale-invariance with respect to the empirical loss $f$. This immediately implies that the growth order of $f$ is zero (see Appendix \ref{sec:homo}), thus satisfying $\alpha > p = 0$. Since normalization techniques are standard practice for current DNN models, it can be said that the empirical loss function can be smoothed even by heavy-tailed noise. Consequently, this insight dramatically expands the applicability of the theory of smoothing via stochastic noise in SGD and significantly enhances its value.}

\section{Implicit Graduated Optimization}
\label{sec:5}
\UPDATE{Let us briefly explain the notation.}
The graduated optimization algorithm uses several smoothed functions with different noise levels. There are a total of $M$ noise levels $(\delta_m)_{m \in [M]}$ and smoothed functions $(\hat{f}_{\delta_m})_{m \in [M]}$ in this paper. The largest noise level is $\delta_1$ and the smallest is $\delta_{M}$ (see also Figure \ref{fig:00}). For all $m \in [M]$, $(\hat{\bm{x}}_t^{(m)})_{t \in \mathbb{N}}$ is the sequence generated by an optimizer to minimize $\hat{f}_{\delta_m}$. Also, we denote 
$
\bm{x}^\star := \underset{\bm{x} \in \mathbb{R}^d} {\operatorname{argmin}} \ f(\bm{x}) \text{\ \ and \ } \bm{x}_{\delta}^\star := \underset{\bm{x} \in \mathbb{R}^d} {\operatorname{argmin}} \ \hat{f}_\delta (\bm{x}).
$

\subsection{Relaxation of $\sigma$-nice Function}
In order to analyze the graduated optimization algorithm, Hazan et al. defined $\sigma$-nice functions, a family of nonconvex functions that has favorable conditions for a graduated optimization algorithm to converge to a global optimal solution \citep{Elad2016OnG}. 

\begin{dfn}
[$\sigma$-nice function \citep{Elad2016OnG}]\label{dfn:sigma} Let $M \in \mathbb{N}$, $m \in [M]$, and $\delta_{m+1} := \frac{\delta_m}{2}$. A function $f \colon \mathbb{R}^d \to \mathbb{R}$ is said to be $\sigma$-nice if the following two conditions hold:

{\em (i)} For all $\delta_m > 0$ and all $\bm{x}_{\delta_m}^\star$, there exists $\bm{x}_{\delta_{m+1}}^\star$ such that: $\left\| \bm{x}_{\delta_m}^\star - \bm{x}_{\delta_{m+1}}^\star \right\| \leq \frac{\delta_m}{2}$.

{\em (ii)} For all $\delta_m>0$, the function $\hat{f}_{\delta_m}(\bm{x})$ over $B(\bm{x}_{\delta_m}^\star;3\delta_m)$ is $\sigma$-strongly convex.
\end{dfn}
Careful examination of their analysis reveals that the necessary conditions for the success of graduated optimization are: (1) $\hat{f}_{\delta}$ has a unique stationary point that is a global optimal solution $\bm{x}_{\delta}^\star$, (2) the optimizer (GD in our case) has a guarantee of convergence to the stationary point of $\hat{f}_{\delta}$, and (3) the optimal solution obtained at the $m$-th stage $\bm{x}_{\delta_m}^\star$ serves as a good starting point for the $(m+1)$-th stage. Condition (3) is guaranteed by the $\sigma$-nice property (i). To satisfy conditions (1) and (2), Hazan et al. adopt local strong convexity as a $\sigma$-nice property (ii), but we show that this property can be replaced with a more relaxed one as follows: there exist $\mu_1, \mu_2 > 0$ such that, for all $\bm{x} \in \mathbb{R}^d$,
\vspace*{-10pt}
\begin{align*} 
\text{($\mu_1$-PL condition)}&\quad \frac{1}{2}\| \nabla g(\bm{x}) \|^2 \geq \mu_1(g(\bm{x}) - g^\star), \\
\text{(unique $\mu_2$-QG condition)}&\quad g(\bm{x}) - g^\star \geq \frac{\mu_2}{2}\| \bm{x} - \bm{x}^\star \|^2,
\vspace*{-6pt}
\end{align*}
where $\bm{x}^\star$ is a global minimizer of $g$ and $g^\star := g(\bm{x}^\star)$. The Polyak-\L{}ojasiewicz (PL) condition \citep{Polyak1963Gra} guarantees that all stationary points of the function $g$ are global optimal solutions, while the unique quadratic growth (QG) condition guarantees that the global optimal solution of the function $g$ is uniquely determined. The well-known QG condition \citep{Ioffe1994OnS, Bonnans1995Sec}, defined as $g(\bm{x}) - g^\star \geq \frac{\mu}{2}\| \bm{x} - \bm{x}_p \|^2$, where $\bm{x}_p$ denotes the projection of $\bm{x}$ onto the solution set, does not guarantee the uniqueness of the solution. We refer to this QG condition that holds when the solution set contains exactly one element as the unique QG condition. These properties are known to be contained within strong convexity (SC), and (SC)$\Rightarrow$(PL) and (SC)$\Rightarrow$(unique QG) hold. Note that under $L_g$-smoothness, (PL)$\Rightarrow$(QG) holds \citep[Theorem 2]{Karimi2016Lin}, but (PL)$\Rightarrow$(unique QG) does not hold. Combining PL and unique QG satisfies the aforementioned condition (1). Finally, to satisfy condition (2), we assume $L_g$-smoothness. That is, we can relax the $\sigma$-nice property as follows.
\begin{dfn}
[weakly $(\mu_1, \mu_2)$-nice function]\label{dfn:weak} Let $M \in \mathbb{N}$, $m \in [M]$, $\gamma \in [0.5, 1)$, and $\delta_{m+1} := \gamma\delta_m$. A function $f \colon \mathbb{R}^d \to \mathbb{R}$ is said to be weakly $(\mu_1, \mu_2)$-nice if the following two conditions hold:

{\em (i)} For all $\delta_m > 0$ and all $\bm{x}_{\delta_m}^\star$, there exists $\bm{x}_{\delta_{m+1}}^\star$ such that: $\left\| \bm{x}_{\delta_m}^\star - \bm{x}_{\delta_{m+1}}^\star \right\| \leq (1-\gamma)\delta_m$.

{\em (ii)} For all $\delta_m>0$, the function $\hat{f}_{\delta_m}(\bm{x})$ is $\mu_1$-PL, unique $\mu_2$-QG, and $L_g$-smooth over $B(\bm{x}_{\delta_m}^\star;3\delta_m)$.
\end{dfn}
The weakly $(\mu_1, \mu_2)$-nice property (i) is a slight extension of the $\sigma$-nice property (i), and it coincides with the $\sigma$-nice property (i) when the decay rate $\gamma$ of the smoothing degree is 0.5. The $\sigma$-nice property (ii) does not require $L_g$-smoothness, whereas the weakly $(\mu_1, \mu_2)$-nice property (ii) does require it. Therefore, this extension is not so large. However, the weakly $(\mu_1, \mu_2)$-nice property does not require the local (strong) convexity of the smoothed function and permits non-convexity as long as the stationary point is unique. Since (SC)$\Rightarrow$(PL) and (SC)$\Rightarrow$(unique QG) hold, the weakly $(\mu_1, \mu_2)$-nice property (ii) extracts the property that graduated optimization is truly required within the $\sigma$-nice property (ii). It can be said to better capture the power of graduated optimization, which represents a significant relaxation of conditions. Note that as long as the original function $f$ is $L_g$-smooth (Assumption \ref{assum:01}), the $L_g$-smoothness of the smoothed function $\hat{f}_{\delta}$ naturally follows (see Lemma \ref{lem:04}).

\begin{wrapfigure}{r}{60mm}
\vspace*{-8pt}
  \centering
  \includegraphics[width=58mm]{./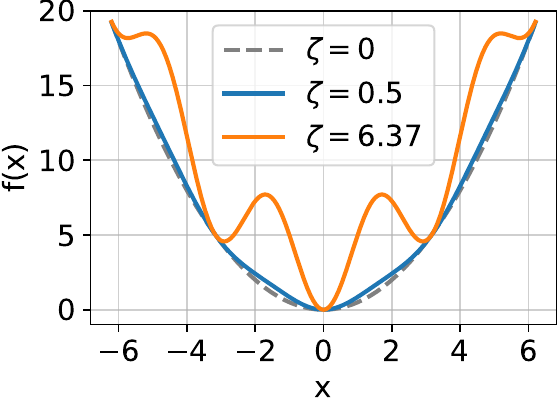} 
  \vspace*{-8pt}
    \caption{1D illustrative plot.}
  \label{fig:nice}
\end{wrapfigure}
Finally, let us introduce an example of a weakly $(\mu_1, \mu_2)$-nice function. The function $f(\bm{x}) := \frac{1}{2}\|\bm{x}\|^2 + \zeta \sum_{i=1}^{d} \sin^2(x_i)$ $(\bm{x} := (x_1, \ldots, x_d)^\top)$ is a quadratic function with added sine noise, where $\zeta>0$ controls the noise magnitude. Increasing $\zeta$ adds more valleys and deepens them. If Gaussian noise is used for smoothing, this function is $\sigma$-nice when $\zeta < 0.5$ and a weakly $(\mu_1, \mu_2)$-nice function when $\zeta < 6.374\cdots$ (see Appendix \ref{sec:nice}). Figure \ref{fig:nice} plots the one-dimensional version of this function. We can see that the weakly $(\mu_1, \mu_2)$-nice property allows for more non-convexity in the function than does the $\sigma$-nice property.

\subsection{Analysis of Implicit Graduated Optimization}\label{sec:4.2}
In this section, we construct an implicit graduated optimization algorithm that varies the learning rate $\eta$ and batch size $b$ so that the degree of smoothing $\delta = \frac{\eta C_\alpha^{1/\alpha} \tau}{b^{(\alpha-1)/\alpha}}$ by stochastic noise in SGD gradually decreases and then analyze its convergence. 
Algorithm \ref{alg:gnc2} embodies the framework of implicit graduated optimization with SGD for weakly $(\mu_1, \mu_2)$-nice functions, while Algorithm \ref{alg:sgd2} is used to optimize each smoothed function; it must be GD wth noise (see equation (\ref{eq:27})). 

\begin{figure}[t]
\centering
\begin{minipage}[t]{0.54\linewidth}
\begin{algorithm}[H]
\caption{Implicit Graduated Optimization}
\label{alg:gnc2}
\begin{algorithmic}
\REQUIRE{$\epsilon \in (0,1), \bm{x}_1 \in B(\bm{x}_{\delta_1}^\star; 3\delta_1), b_1 \in [n],$}\\%
\qquad\quad \ $\eta_1>0, \gamma \in [0.5,1), \alpha \in (1,2]$
\STATE{$\delta_1 = \frac{\eta_1 C_\alpha^{1/\alpha}\tau}{b_1^{(\alpha-1)/\alpha}}, \beta_0 \leq \min\left\{ \frac{\gamma}{4L_f\delta_1}, \frac{\gamma}{\sqrt{\mu_2}\delta_1} \right\},$}%
\STATE{$M= \log_{\gamma}{\beta_0 \epsilon}$}
\FOR{$m=1$ to $M$}
\STATE{$\epsilon_m := \mu_2\delta_m^2/2, \ T_m := H_m / (\epsilon_m - I_m)$}
\STATE{$\kappa_m / \lambda_m^{(\alpha-1)/\alpha} = \gamma \ (\kappa_m \in (0,1], \lambda_m \geq 1)$}
\STATE{$\bm{x}_{m+1} := \text{GD with noise}(T_m, \bm{x}_m, \hat{f}_{\delta_m}, \eta_m)$}
\STATE{$\eta_{m+1} := \kappa_m \eta_m, b_{m+1} := \lambda_m b_m$}
\STATE{$\delta_{m+1} := \frac{\eta_{m+1} C_{\alpha}^{1/\alpha}\tau}{b_{m+1}^{(\alpha-1)/\alpha}}$}
\ENDFOR
\RETURN $\bm{x}_{M +1}$
\end{algorithmic}
\end{algorithm}
\end{minipage}
\hfill
\begin{minipage}[t]{0.45\linewidth}
\begin{algorithm}[H]
\caption{GD with noise}
\label{alg:sgd2}
\begin{algorithmic}
\REQUIRE$T_m, \hat{\bm{x}}_1^{(m)}, \hat{f}_{\delta_m}, \eta_m > 0$
\STATE{$\bm{\nu}_t^{(m)}$ is zero mean noise for analysis.}
\FOR{$t=1$ to $T_m$}
\STATE{$\hat{\bm{x}}_{t+1}^{(m)} := \hat{\bm{x}}_t^{(m)} - \eta_m \nabla \hat{f}_{\delta_m} (\hat{\bm{x}}_t^{(m)}) + \bm{\nu}_t^{(m)}$}
\ENDFOR
\RETURN $\hat{\bm{x}}_{T_m +1}^{(m)}$
\end{algorithmic}
\end{algorithm}
\end{minipage}
\end{figure}

\begin{rem}
Note that our implicit graduated optimization (Algorithm \ref{alg:gnc2}) is achieved by SGD with a decaying learning rate and/or increasing batch size. Given the powerful fact (equation (\ref{eq:27})) that SGD for a function $f$ is equivalent to GD with noise for a function $\hat{f}_{\delta_m}$, the analysis can be performed for GD with noise. The operation of smoothing $f$ to obtain $\hat{f}_{\delta_m}$ is implicitly achieved through the stochastic noise of SGD, rather than by using integration or an approximation. Therefore, gradient descent is not actually executed, the gradient of $\hat{f}_{\delta_m}$ is not computed, and only the stochastic gradient of $f$ is required. This is why our proposed method is ``implicit'' graduated optimization. Furthermore, the parameters $L_f$ and $\tau$ appearing in Algorithm \ref{alg:gnc2} are merely tools for analysis and do not need to be known beforehand. In practice, only the learning rate and batch size need to be specified to execute SGD.
\end{rem}

From the definition of weakly $(\mu_1, \mu_2)$-nice functions, the smoothed function $\hat{f}_{\delta_m}$ has the $L_g$-smoothness, $\mu_1$-PL, and $\mu_2$-QG in $B(\bm{x}_{\delta_m}^\star ; 3\delta_m)$ proprties. Also, the learning rate used by Algorithm \ref{alg:sgd2} to optimize $\hat{f}_{\delta_m}$ should always be constant. Therefore, let us examine the convergence of GD with noise with a constant learning rate for $\hat{f}_{\delta_m}$ that is an $L_g$-smooth, $\mu_1$-PL, and unique $\mu_2$-QG function. The proof of Theorem \ref{thm:03} is in Appendix \ref{subsec:9.3}.

\begin{thm}
[Convergence analysis of Algorithm \ref{alg:sgd2}]\label{thm:03} Let $f \colon \mathbb{R}^d \to \mathbb{R}$ be an $L_f$-Lipschitz weakly $(\mu_1, \mu_2)$-nice function and $K:=L_g^{\alpha-1}(2L_f)^{2-\alpha}$. Suppose that $\eta_m < \min\left\{ \frac{1}{\mu_1}, \left( \frac{2}{K} \right)^{\frac{1}{\alpha-1}}\right\}$ and $\hat{\bm{x}}_1^{(m)} \in B(\bm{x}_{\delta_m}^\star; 3\delta_m)$ for all $m \in [M]$. Then, the sequence $(\hat{\bm{x}}_t^{(m)})_{t \in \mathbb{N}}$ generated by Algorithm \ref{alg:sgd2} satisfies
\begin{align}
\min_{t \in [T]} \mathbb{E}\left[ \hat{f}_{\delta_m} \left(\hat{\bm{x}}_t^{(m)} \right) \right] - \hat{f}_{\delta_m}(\bm{x}_{\delta_m}^\star)
\leq \frac{H_m}{T} + I_m
= \mathcal{O}\left( \frac{1}{T} + \delta_m^\alpha \right), \label{eq:29}%
\end{align}
where $H_m := \frac{2L_f^2}{\mu_1\mu_2\left( 2-K\eta_m^{\alpha-1}\right)}$ and $I_m := \frac{K\left\{ (2-\alpha)\eta_m^\alpha + 2^{\alpha}(1+2^\alpha L_g^\alpha) \delta_m^\alpha \right\}}{\alpha\mu_1\mu_2\left( 2-K\eta_m^{\alpha-1}\right)}$ are nonnegative constants.
\end{thm}

Theorem \ref{thm:03} shows that Algorithm \ref{alg:sgd2} can reach an $\epsilon_m$-neighborhood of the optimal solution $\bm{x}_{\delta_m}^\star$ of $\hat{f}_{\delta_m}$ in approximately $T_m := H_m / (\epsilon_m - I_m)$ iterations. 
Theorem \ref{thm:03} requires that the initial point $\hat{\bm{x}}_1^{(m)}$ at each stage be contained within a local favorable region $B(\bm{x}_{\delta_m}^\star; 3\delta_m)$ of $\hat{f}_{\delta_m}$, and the following proposition guarantees this. The proof of Proposition \ref{prop:999} is in Appendix \ref{sec:E.3}.

\begin{prop}\label{prop:999}
Let $f$ be a weakly $(\mu_1, \mu_2)$-nice function and $\delta_{m+1} := \gamma \delta_m$. Suppose that $\gamma \in [0.5,1)$ and $\bm{x}_1 \in B(\bm{x}_{\delta_1}^\star; 3\delta_1)$. Then for all $m \in [M]$, $\mathbb{E}\left[ \left\| \bm{x}_m - \bm{x}_{\delta_m}^\star \right\| \right] < 3 \delta_m.$
\end{prop}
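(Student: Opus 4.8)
The plan is to prove the statement by induction on $m$, using Theorem \ref{thm:03} to control each inner GD phase and the two defining properties of a $\sigma_m$-nice function (Definition \ref{dfn:2.2}) to pass from stage $m$ to stage $m+1$. The base case $m=1$ is immediate from the hypothesis $\bm{x}_1 \in B(\bm{x}_{\delta_1}^\star;3\delta_1)$. For the inductive step, I would assume $\|\bm{x}_m - \bm{x}_{\delta_m}^\star\| < 3\delta_m$. This places the starting point of the $m$-th GD phase inside the ball $B(\bm{x}_{\delta_m}^\star;3\delta_m)$ on which property (ii) makes $\hat{f}_{\delta_m}$ a $\sigma_m$-strongly convex and $L_g$-smooth function, so Theorem \ref{thm:03} applies to $\text{GD}(T_m,\bm{x}_m,\hat{f}_{\delta_m},\eta_m)$. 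With $T_m = H_m/\epsilon_m$ and $\epsilon_m = \sigma_m\delta_m^2/2$, the theorem yields the function-value bound $\hat{f}_{\delta_m}(\bm{x}_{m+1}) - \hat{f}_{\delta_m}(\bm{x}_{\delta_m}^\star) \leq \epsilon_m$.

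Next I would convert this function-value gap into a distance bound. Since $\bm{x}_{\delta_m}^\star$ minimizes $\hat{f}_{\delta_m}$, $\sigma_m$-strong convexity gives $\frac{\sigma_m}{2}\|\bm{x}_{m+1}-\bm{x}_{\delta_m}^\star\|^2 \leq \hat{f}_{\delta_m}(\bm{x}_{m+1}) - \hat{f}_{\delta_m}(\bm{x}_{\delta_m}^\star) \leq \sigma_m\delta_m^2/2$, hence $\|\bm{x}_{m+1}-\bm{x}_{\delta_m}^\star\| \leq \delta_m$. Property (i) of Definition \ref{dfn:2.2} then supplies a minimizer $\bm{x}_{\delta_{m+1}}^\star$ of the next smoothed function with $\|\bm{x}_{\delta_m}^\star - \bm{x}_{\delta_{m+1}}^\star\| \leq \delta_{m+1} = \gamma\delta_m$. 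A triangle inequality gives $\|\bm{x}_{m+1}-\bm{x}_{\delta_{m+1}}^\star\| \leq \delta_m + \gamma\delta_m = (1+\gamma)\delta_m$. Finally, because $\gamma \in [0.5,1)$ we have $1+\gamma \leq 3\gamma$, so $(1+\gamma)\delta_m \leq 3\gamma\delta_m = 3\delta_{m+1}$, which closes the induction. Geometrically, each inner GD phase contracts the distance to $\bm{x}_{\delta_m}^\star$ below $\delta_m$, and the $\sigma_m$-nice shift of the minimizer is small enough that $\bm{x}_{m+1}$ lands well inside the next strongly convex ball, keeping the recursion self-sustaining.

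The main obstacle is the strict inequality at the endpoint $\gamma = 0.5$, where the chain above only yields $(1+\gamma)\delta_m = 3\delta_{m+1}$ with equality. To recover strictness I would exploit slack that the bound leaves unused: running the GD phase for $T_m = \lceil H_m/\epsilon_m\rceil$ steps makes the function-value gap strictly smaller than $\epsilon_m$ (unless $H_m/\epsilon_m$ happens to be an integer), so the distance bound becomes $\|\bm{x}_{m+1}-\bm{x}_{\delta_m}^\star\| < \delta_m$ and the final inequality is strict. A second point requiring care is that Algorithm \ref{alg:sgd2} returns the last iterate whereas Theorem \ref{thm:03} bounds the best iterate; I would note that under the step-size restriction $\eta < \min\{1/\sigma_m, 2/L_g\}$ the descent property makes the objective values nonincreasing along the trajectory, so the returned iterate is in fact the best one and inherits the $\epsilon_m$ gap. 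One should also confirm that all GD iterates remain inside the strongly convex ball $B(\bm{x}_{\delta_m}^\star;3\delta_m)$ so that Theorem \ref{thm:03} is legitimately invoked on this local region; this is consistent with the monotone decrease of $\hat{f}_{\delta_m}$ together with the contraction toward $\bm{x}_{\delta_m}^\star$ established above.
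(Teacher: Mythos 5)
Your proof follows the same route as the paper's: induction on $m$, an application of Theorem \ref{thm:03} with $T_m = H_m/\epsilon_m$ to get the function-value gap $\epsilon_m = \sigma_m\delta_m^2/2$, $\sigma_m$-strong convexity to convert that gap into $\|\bm{x}_{m+1}-\bm{x}_{\delta_m}^\star\| \leq \delta_m$, and a triangle inequality with Definition \ref{dfn:2.2}(i) plus $\gamma \geq 1/2$ to conclude $\|\bm{x}_{m+1}-\bm{x}_{\delta_{m+1}}^\star\| \leq 3\delta_{m+1}$. If anything you are more careful than the paper, which likewise only reaches the non-strict bound $3\delta_{m+1}$ at the endpoint $\gamma=0.5$ (and in its triangle-inequality step uses $\|\bm{x}_{\delta_m}^\star - \bm{x}_{\delta_{m+1}}^\star\| \leq \delta_m - \delta_{m+1}$ rather than the $\leq \delta_{m+1}$ actually stated in Definition \ref{dfn:2.2}(i), yielding $(2-\gamma)\delta_m$ in place of your $(1+\gamma)\delta_m$ --- both are $\leq 3\gamma\delta_m$ exactly when $\gamma \geq 1/2$), and which does not address the last-iterate-versus-best-iterate mismatch with Theorem \ref{thm:03} that you patch via the descent property.
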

$\bm{x}_m$ is the approximate solution obtained by optimization of the smoothed function $\hat{f}_{\delta_{m-1}}$ with Algorithm \ref{alg:sgd2} and is the initial point of optimization of the next smoothed function $\hat{f}_{\delta_m}$. Therefore, Proposition \ref{prop:999} implies that $\gamma \in [0.5,1)$ must hold for the initial point of optimization of $\hat{f}_{\delta_m}$ to be contained in the local favorable region of $\hat{f}_{\delta_m}$. Therefore, from Theorem \ref{thm:03} and Proposition \ref{prop:999}, if $f$ is a weakly $(\mu_1, \mu_2)$-nice function and $\bm{x}_1 \in B(\bm{x}_{\delta_1}^\star ; 3\delta_1)$ holds, the sequence $(\bm{x}_m)_{m \in [M]}$ generated by Algorithm \ref{alg:gnc2} never goes outside of the local favorable region $B(\bm{x}_{\delta_m}^\star; 3\delta_m)$ of each smoothed function $\hat{f}_{\delta_m}$ $(m \in [M])$.

The next theorem guarantees the convergence of Algorithm \ref{alg:gnc2} with the weakly $(\mu_1, \mu_2)$-nice function (The proof of Theorem \ref{thm:3.4} is in Appendix \ref{subsec:9.4}). 
\begin{thm}
[Convergence analysis of Algorithm \ref{alg:gnc2}]\label{thm:3.4} Let $\epsilon \in (0,1)$, and $f \colon \mathbb{R}^d \to \mathbb{R}$ be an $L_f$-Lipschitz weakly $(\mu_1, \mu_2)$-nice function. Suppose that we run Algorithm \ref{alg:gnc2}; then after $\Omega\left( 1/\epsilon^{2}\right)$ rounds, the algorithm reaches an $\epsilon$-neighborhood of the global optimal solution $\bm{x}^\star$.
\end{thm}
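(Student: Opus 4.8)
The strategy is to chain the single-round guarantee of Theorem~\ref{thm:03} with the invariant of Proposition~\ref{prop:999}, and then to collapse the per-round iteration counts into one geometric sum. First I would verify that every inner optimization is both valid and contracts toward the current smoothed minimizer. By Proposition~\ref{prop:999}, for each $m \in [M]$ the starting point $\bm{x}_m$ lies in $B(\bm{x}_{\delta_m}^\star; 3\delta_m)$, which by Definition~\ref{dfn:2.2}(ii) is exactly the region on which $\hat{f}_{\delta_m}$ is $\sigma_m$-strongly convex; hence Theorem~\ref{thm:03} applies to the call $\mathrm{GD}(T_m,\bm{x}_m,\hat{f}_{\delta_m},\eta_m)$. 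With $T_m = H_m/\epsilon_m$ and $\epsilon_m = \sigma_m\delta_m^2/2$, Theorem~\ref{thm:03} gives $\hat{f}_{\delta_m}(\bm{x}_{m+1}) - \hat{f}_{\delta_m}(\bm{x}_{\delta_m}^\star) \leq \epsilon_m$, and feeding this through the strong-convexity inequality $\frac{\sigma_m}{2}\|\bm{x}_{m+1}-\bm{x}_{\delta_m}^\star\|^2 \leq \hat{f}_{\delta_m}(\bm{x}_{m+1})-\hat{f}_{\delta_m}(\bm{x}_{\delta_m}^\star)$ yields the per-round contraction $\|\bm{x}_{m+1}-\bm{x}_{\delta_m}^\star\| \leq \delta_m$.

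Next I would convert this per-round contraction into closeness to the genuine optimum $\bm{x}^\star$. Summing Definition~\ref{dfn:2.2}(i) over the levels below $m$ and using $\hat{f}_{\delta}\to f$ (hence $\bm{x}_\delta^\star \to \bm{x}^\star$) as $\delta \to 0$ bounds $\|\bm{x}_{\delta_m}^\star - \bm{x}^\star\|$ by the geometric tail $\sum_{k\geq m}\delta_{k+1} = \mathcal{O}\!\left(\delta_m/(1-\gamma)\right)$; combined with the per-round contraction and a triangle inequality this gives $\|\bm{x}_{M+2}-\bm{x}^\star\| = \mathcal{O}\!\left(\delta_{M+1}/(1-\gamma)\right)$. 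The definitions $\delta_{m+1}=\gamma\delta_m$ and $M = \log_{\gamma}(\alpha_0\epsilon)$ force $\delta_{M+1}=\gamma^{M}\delta_1 = \alpha_0\epsilon\,\delta_1$, and the choice $\alpha_0 = \min\{\frac{1}{16L_f\delta_1},\frac{1}{\sqrt{2\sigma}\delta_1}\}$ is precisely what drives this below $\epsilon$: the $1/(16L_f\delta_1)$ branch controls the Lipschitz (function-value) slack $|\hat{f}_\delta-f|\leq L_f\delta$, while the $1/(\sqrt{2\sigma}\delta_1)$ branch controls the strong-convexity (distance) conversion. This establishes that the returned $\bm{x}_{M+2}$ lies in an $\epsilon$-neighborhood of $\bm{x}^\star$.

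Finally I would bound the total work $\sum_{m=1}^{M+1}T_m$. Substituting $\epsilon_m=\sigma_m\delta_m^2/2$ into $T_m=H_m/\epsilon_m$ and using the explicit $H_m$ of Theorem~\ref{thm:03} gives $T_m = \frac{9(1-\sigma_m\eta_m)}{\sigma_m\eta_m} + \frac{6L_f}{\sigma_m\eta_m(2-L_g\eta_m)\delta_m}$, whose dominant term as $\delta_m\to 0$ is $\mathcal{O}\!\left(1/(\sigma_m\eta_m\delta_m)\right)$. Since $\delta_m=\gamma^{m-1}\delta_1$ and any admissible schedule satisfies $\kappa_m/\sqrt{\lambda_m}=\gamma$, the worst case is the pure learning-rate-decay schedule $\eta_m=\gamma^{m-1}\eta_1,\ b_m=b_1$, for which $T_m=\mathcal{O}(\gamma^{-2(m-1)})$, whereas the pure batch-increase schedule only gives $T_m=\mathcal{O}(\gamma^{-(m-1)})$. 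Summing the worst case, $\sum_{m=1}^{M+1}\gamma^{-2(m-1)}=\mathcal{O}(\gamma^{-2M})=\mathcal{O}((\alpha_0\epsilon)^{-2})=\mathcal{O}(1/\epsilon^2)$, because $\alpha_0$ and $\delta_1$ are constants. Both the accuracy and the $\mathcal{O}(1/\epsilon^2)$ complexity then follow.

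The hardest part will be the second step: making the passage $\bm{x}_{\delta_m}^\star \to \bm{x}^\star$ quantitative. Proposition~\ref{prop:999} and Theorem~\ref{thm:03} only locate $\bm{x}_{m+1}$ relative to the smoothed minimizer $\bm{x}_{\delta_m}^\star$, so one must separately show, uniformly in $m$, that the smoothed minimizers converge to the true minimizer at the geometric rate dictated by Definition~\ref{dfn:2.2}(i), and verify that the two branches of $\alpha_0$ genuinely cover both the function-value and the distance formulations of an ``$\epsilon$-neighborhood.'' The complexity sum, by contrast, is a routine geometric series once $T_m$ is written in terms of $\gamma^{m-1}$, with the final round dominating the count.
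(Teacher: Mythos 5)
Your overall architecture---Proposition~\ref{prop:999} to validate each inner call, Theorem~\ref{thm:03} with $T_m = H_m/\epsilon_m$ to get the per-level suboptimality $\epsilon_m = \sigma_m\delta_m^2/2$, and a geometric sum for the total complexity---matches the paper's, and your complexity count is if anything tighter: the paper bounds every $T_m$ crudely by the last-level cost $H_4/(\sigma_1\delta_M^2)$ and picks up an extra factor of $M+1 = \mathcal{O}(\log(1/\epsilon))$ that it then absorbs into $\mathcal{O}(1/\epsilon^2)$, whereas your geometric-series argument shows the last round genuinely dominates. Where you diverge is the accuracy step. The paper never passes to the limit $\bm{x}_{\delta}^\star \to \bm{x}^\star$: it measures the ``$\epsilon$-neighborhood'' in function value and decomposes $f(\bm{x}_{M+2}) - f(\bm{x}^\star)$ into three terms, bounding the two $f$-versus-$\hat{f}_{\delta_M}$ gaps by $\delta_M L_f$ each via Lemma~\ref{lem:06}, and the remaining term by $6L_f\delta_M + \epsilon_M$ using Lipschitzness of $\hat{f}_{\delta_M}$ together with $\bm{x}_{M+1}, \bm{x}_{M+2} \in B(\bm{x}_{\delta_M}^\star; 3\delta_M)$; the two branches of $\alpha_0$ then kill $8L_f\delta_M$ and $\epsilon_M = \sigma_M\delta_M^2$ respectively, exactly as you guessed. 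Your route instead telescopes Definition~\ref{dfn:2.2}(i) down to $\delta \to 0$ to bound $\|\bm{x}_{\delta_M}^\star - \bm{x}^\star\|$ and concludes with a distance bound. That is workable, but it needs one extra ingredient the paper sidesteps: condition (i) only makes the sequence $(\bm{x}_{\delta_k}^\star)_k$ Cauchy, and identifying its limit with $\bm{x}^\star$ requires a further appeal to the uniform approximation $|\hat{f}_\delta - f| \leq \delta L_f$ to show the limit point actually minimizes $f$---you correctly flag this as the hard part. In short, the paper's function-value formulation buys a shorter, limit-free proof (at the cost of proving closeness in objective value rather than the literal distance neighborhood your argument targets), while your version, once the limit identification is supplied, yields the stronger geometric statement and a cleaner complexity constant.
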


\begin{rem}
Hazan et al. proved that, for a $\sigma$-nice function, an explicit graduated optimization algorithm employing SGD with a decaying learning rate for optimizing each smoothed functions can reach a global optimal solution in $\mathcal{O}\left(1/\epsilon^2 \right)$ iterations \citep{Elad2016OnG}. In Theorem \ref{thm:03}, adopting an appropriate decaying learning rate improves the convergence rate to $\mathcal{O}(1/T)$, and consequently, the computational complexity in Theorem \ref{thm:3.4} also improves to $\mathcal{O}\left(1/\epsilon^2 \right)$. Therefore, the explicit graduated optimization algorithm using a decaying learning rate instead of a constant learning rate in Algorithm \ref{alg:sgd2} converges to a global optimal solution in $\mathcal{O}(1/\epsilon^2)$ iterations for weakly $(\mu_1, \mu_2)$-nice functions, just as it does for $\sigma$-nice functions.
Although we cannot adopt a decreasing learning rate in Algorithm \ref{alg:sgd2} due to our focus on implicit graduated optimization, the above fact is crucial for guaranteeing the effectiveness of weakly $(\mu_1, \mu_2)$-nice functions.
\end{rem}

\section{Conclusion}
\label{sec:con}
We extended the definition of function smoothing from Gaussian noise to heavy-tailed noise. This enabled us to evaluate the implicit smoothing effect of noise on the objective function even when the stochastic noise in SGD follows heavy-tailed noise. Since the cross-entropy loss defined using the output of a DNN equipped with some weight regularization technique can be smoothed even by heavy-tailed noise, our smoothing theory has very broad applicability. Furthermore, we proposed weakly $(\mu_1,\mu_2)$-nice functions, a broader family of nonconvex functions than $\sigma$-nice functions, and proved that implicit graduated optimization algorithms exploiting the smoothing effect of SGD's light- or heavy- tailed stochastic noise converge to the global optimal solution of weakly $(\mu_1,\mu_2)$-nice functions. 



\bibliography{cvpr2024}

\appendix


\section{Derivation of equation (\ref{eq:1})}
\label{sec:der}
Let $\bm{y}_{t}$ be the parameter updated by gradient descent (GD) and $\bm{x}_{t+1}$ be the parameter updated by SGD at time $t$, i.e., 
\begin{align*}
\bm{y}_{t} &:= \bm{x}_t - \eta \nabla f(\bm{x}_t), \\
\bm{x}_{t+1} &:= \bm{x}_t - \eta \nabla f_{\mathcal{S}_t}(\bm{x}_t) \\
 &= \bm{x}_t - \eta (\nabla f(\bm{x}_t) + \bm{\omega}_t).
\end{align*}
Then, we have
\begin{align}
\bm{x}_{t+1} 
&:= \bm{x}_t - \eta \nabla f_{\mathcal{S}_t}(\bm{x}_t) \nonumber \\
&= \left( \bm{y}_t + \eta \nabla f(\bm{x}_t) \right) - \eta \nabla f_{\mathcal{S}_t}(\bm{x}_t) \nonumber \\
&= \bm{y}_t - \eta \bm{\omega}_t, \label{eq:35}
\end{align}
from $\bm{\omega}_t := \nabla f_{\mathcal{S}_t} (\bm{x}_t) - \nabla f (\bm{x}_t)$. Hence,
\begin{align*}
\bm{y}_{t+1} 
&= \bm{x}_{t+1} - \eta \nabla f(\bm{x}_{t+1}) \\
&= \bm{y}_t - \eta \bm{\omega}_t - \eta \nabla f(\bm{y}_t - \eta \bm{\omega}_t).
\end{align*}
By taking the expectation with respect to $\bm{\omega}_t$ on both sides, we have, from $\mathbb{E}_{\bm{\omega}_t} \left[ \bm{\omega}_t \right] = \bm{0}$,
\begin{align*}
\mathbb{E}_{\bm{\omega}_t} \left[ \bm{y}_{t+1} \right]
=\mathbb{E}_{\bm{\omega}_t} \left[ \bm{y}_t \right] - \eta \nabla \mathbb{E}_{\bm{\omega}_t} \left[f(\bm{y}_t - \eta \bm{\omega}_t) \right],
\end{align*}
where we have used $\mathbb{E}_{\bm{\omega}_t} \left[ \nabla f(\bm{y}_t - \eta \bm{\omega}_t) \right] = \nabla \mathbb{E}_{\bm{\omega}_t} \left[f(\bm{y}_t - \eta \bm{\omega}_t) \right]$, which holds for a Lipschitz-continuous and differentiable $f$ \citep[Theorem 7.49]{Shapiro2009Lec}. In addition, from (\ref{eq:35}) and $\mathbb{E}_{\bm{\omega}_t} \left[\bm{\omega}_t \right] = \bm{0}$, we obtain
\begin{align*}
\mathbb{E}_{\bm{\omega}_t} \left[ \bm{x}_{t+1} \right]
= \bm{y}_t.
\end{align*}
Therefore, on average, the parameter $\bm{x}_{t+1}$ of the function $f$ arrived at by SGD coincides with the parameter $\bm{y}_t$ of the smoothed function $\hat{f}(\bm{y}_t) := \mathbb{E}_{\bm{\omega}_t} \left[f(\bm{y}_t - \eta \bm{\omega}_t) \right]$ arrived at by GD.

\section{Proofs of the Lemmas in main body}
\label{sec:C}
\subsection{Proof of Lemma \ref{lem:04}}
\begin{proof}
From Definition \ref{dfn:newfhat} and Assumption \ref{assum:01}, we have, for all $\bm{x}, \bm{y} \in \mathbb{R}^d$,
\begin{align*}
\left\| \nabla \hat{f}_\delta(\bm{x}) - \nabla \hat{f}_\delta(\bm{y}) \right\|
&=\left\| \nabla \mathbb{E}_{\bm{u}}\left[f(\bm{x}-\delta \bm{u})\right] - \nabla \mathbb{E}_{\bm{u}}\left[f(\bm{y}-\delta \bm{u})\right] \right\| \\
&=\left\| \mathbb{E}_{\bm{u}} \left[\nabla f(\bm{x} - \delta \bm{u})\right] - \mathbb{E}_{\bm{u}} \left[\nabla f(\bm{y} - \delta \bm{u})\right] \right\| \\
&=\left\| \mathbb{E}_{\bm{u}} \left[\nabla f(\bm{x} - \delta \bm{u}) - \nabla f(\bm{y} - \delta \bm{u})\right] \right\| \\
&\leq \mathbb{E}_{\bm{u}} \left[ \left\| \nabla f(\bm{x} - \delta \bm{u}) - \nabla f(\bm{y} - \delta \bm{u}) \right\| \right] \\
&\leq \mathbb{E}_{\bm{u}} \left[L_g \left\| (\bm{x} - \delta \bm{u}) - (\bm{y} - \delta \bm{u}) \right\| \right] \\
&= \mathbb{E}_{\bm{u}} \left[L_g \left\| \bm{x} - \bm{y} \right\| \right] \\
&= L_g \| \bm{x} - \bm{y} \|.
\end{align*}
In addition, from Assumption \ref{assum:02}, we have, for all $\bm{x}, \bm{y} \in \mathbb{R}^d$,
\begin{align*}
\left|\hat{f}_\delta(\bm{x}) - \hat{f}_\delta(\bm{y}) \right|
&=\left| \mathbb{E}_{\bm{u}}\left[f(\bm{x}-\delta \bm{u})\right] - \mathbb{E}_{\bm{u}}\left[f(\bm{y}-\delta \bm{u})\right] \right| \\
&=\left| \mathbb{E}_{\bm{u}} \left[f(\bm{x} - \delta \bm{u}) - f(\bm{y} - \delta \bm{u})\right] \right| \\
&\leq \mathbb{E}_{\bm{u}} \left[\left| f(\bm{x} - \delta \bm{u}) - f(\bm{y} - \delta \bm{u}) \right|\right] \\
&\leq \mathbb{E}_{\bm{u}} \left[L_f \| (\bm{x} - \delta \bm{u}) - (\bm{y} - \delta \bm{u}) \|\right] \\
&= \mathbb{E}_{\bm{u}} \left[L_f \left\| \bm{x} - \bm{y} \right\| \right] \\
&= L_f \| \bm{x} - \bm{y} \|.
\end{align*}
This completes the proof.
\end{proof}

\subsection{Lemma for proof of Lemma \ref{lem:00}}
\begin{lem}
[von Bahr-Esseen inequality \citep{Bahr1965Ine}]\label{lem:vBE} Let $\alpha \in [1,2]$, and let $X_1, X_2, \ldots, X_n$ be a sequence of independent random variables with $\mathbb{E}[X_i] = 0$ and $\mathbb{E}\left[ |X_i|^\alpha \right] < \infty$ for all $i \in [n]$. Then, 
\begin{align*}
\mathbb{E}\left[ \left| \sum_{i \in [n]} X_i \right|^\alpha \right]
\leq C_\alpha \sum_{i\in[n]}\mathbb{E}\left[ |X_i|^\alpha \right],
\end{align*}
where $C_\alpha$ is a constant that depends only on $\alpha$. According to \citep[Proposition 1.8]{Pinelis2015Bes}, $C_\alpha$ is continuously and strictly decreasing in $\alpha \in (1,2]$ from $\displaystyle \lim_{\alpha \to1} C_\alpha = 2$ to $C_2 = 1$.
\end{lem}

\subsection{Proof of Lemma \ref{lem:00}}\label{sec:C.3}
\begin{proof}
From the Assumptions \ref{assum:03} and Lemma \ref{lem:vBE}, we have
\begin{align*}
\mathbb{E}_{\bm{\xi}_t} \left[ \left\| \nabla f_{\mathcal{S}_t}(\bm{x}_t) - \nabla f(\bm{x}_t) \right\|^\alpha \right]
&= \mathbb{E}_{\bm{\xi}_t} \left[ \left\| \frac{1}{b}\sum_{i \in [b]} \mathsf{G}_{\xi_{t,i}}(\bm{x}_t) - \nabla f(\bm{x}_t) \right\|^\alpha \right] \\
&= \mathbb{E}_{\bm{\xi}_t} \left[ \left\| \frac{1}{b}\sum_{i \in [b]} \left(\mathsf{G}_{\xi_{t,i}}(\bm{x}_t) - \nabla f(\bm{x}_t) \right) \right\|^\alpha \right] \\
&= \frac{1}{b^\alpha} \mathbb{E}_{\bm{\xi}_t} \left[ \left\| \sum_{i \in [b]} \left(\mathsf{G}_{\xi_{t,i}}(\bm{x}_t) - \nabla f(\bm{x}_t) \right) \right\|^\alpha \right] \\
&\leq \frac{C_\alpha}{b^\alpha} \sum_{i\in[b]} \mathbb{E}_{\bm{\xi}_t}\left[ \| \mathsf{G}_{\xi_{t,i}}(\bm{x}_t) - \nabla f(\bm{x}_t) \|^\alpha \right] \\
&= \frac{C_\alpha \tau^\alpha}{b^{\alpha-1}}.
\end{align*}
This completes the proof.
\end{proof}

\UPDATE{
\section{Discussion on Growth Order of the Empirical Loss Function}
\label{sec:homo}
A neural network $\Phi$ is said to be homogenous if there is a number $L > 0$ such that the network output $\Phi(\bm{x}; \bm{z})$, where $\bm{x}$ stands for the parameter and $\bm{z}$ stands for the input, satisfies the following condition:
\begin{align*}
\forall c >0 \colon \Phi(c\bm{x}; \bm{z}) = c^L \Phi(\bm{x}; \bm{z}) \ \ \text{for all $\bm{x}$ and $\bm{z}$}.
\end{align*}
Many neural networks are homogenous \citep{Neyshabur2015Pat, Du2018Alg}, especially, any deep fully connected neural network or deep CNN with ReLU or LeakyReLU activation has an order $L$ equal to the number of layers when all bias terms are removed \citep{Lyu2020Gra}. Since any parameter $\bm{x}$ can be expressed as $\bm{x}=c \bm{s}$ $(c \geq 0, \| x \| = c, \| \bm{s} \| = 1)$, under homogenous, the following holds:
\begin{align*}
\| \Phi(\bm{x}) \| = \| \Phi(c \bm{s}) \| = c^L \| \Phi(\bm{s}) \| = \| x \|^L \| \Phi(\bm{s})\| \leq C \| \bm{x} \|^L,
\end{align*}
where we assume $\| \Phi(\bm{s})\| \leq C$ for $C > 0$. If the correct label is a one-hot vector and the correct index is $*$, then the cross-entropy loss can be expressed as follows:
\begin{align*}
f(\bm{x}) 
&= -\log \left( \frac{e^{\Phi(\bm{x})_*}}{\sum_{k \in [K]}e^{\Phi(\bm{x})_k}}\right)
=\log\left( \sum_{k \in [K]} e^{\Phi(\bm{x})_k}\right) - \Phi(\bm{x})_* \\
&\leq \max_{k \in [K]} \Phi(\bm{x})_k + \log K - \Phi(\bm{x})_* 
\leq \left|\max_{k \in [K]} \Phi(\bm{x})_k \right| + \log K + \left|\Phi(\bm{x})_* \right| \\
&\leq 2\| \Phi(\bm{x})\| + \log K \leq 2C\| \bm{x} \|^L + \log K
= \mathcal{O}\left( \| \bm{x} \|^L \right),
\end{align*}
where $K$ is the number of classes. On the other hand, according to \citep{Wan2021Sph}, when incorporating the commonly used normalization unit into the model, the parameter $\bm{x}$ exhibits scale-invariance with respect to the model output $\Phi(\bm{x})$, i.e., 
\begin{align*}
\forall c>0 \colon \Phi(c\bm{x}; \bm{z}) = \Phi(\bm{x}; \bm{z}) = c^0 \Phi(\bm{x}; \bm{z}) \ \ \text{for all $\bm{x}$ and $\bm{z}$}.%
\end{align*}
A similar argument to the one above shows that $f(\bm{x}) = \mathcal{O}\left( \| \bm{x} \|^0 \right)$.

\section{Example of $\sigma$-Nice Function and Weakly $(\mu_1, \mu_2)$-Nice Function}
\label{sec:nice}
Let us consider the following function:
\begin{align*}
f(\bm{x}) := \frac{1}{2}\|\bm{x}\|^2 + \zeta \sum_{i=1}^{d} \sin^2(x_i),
\end{align*}
where $\bm{x} := (x_1, \ldots, x_d)^\top, \zeta>0$. First, we need to derive a smoothed version $\hat{f}_{\delta}$.
\begin{align*}
\hat{f}_{\delta}(\bm{x})
&= \mathbb{E}_{\bm{u} \sim \mathcal{N}\left( \bm{0}; \frac{1}{d}I_d \right)}\left[ f(\bm{x} - \delta \bm{u})\right] \\
&= \mathbb{E}_{\bm{u} \sim \mathcal{N}\left( \bm{0}; \frac{1}{d}I_d \right)}\left[ \frac{1}{2}\| \bm{x} - \delta \bm{u} \|^2 + \zeta \sum_{i=1}^{d} \sin^2(x_i - \delta u_i) \right] \\
&= \mathbb{E}_{\bm{u} \sim \mathcal{N}\left( \bm{0}; \frac{1}{d}I_d \right)} \left[ \frac{1}{2}\|\bm{x}\|^2 - \delta\langle \bm{x},\bm{u} \rangle + \frac{\delta^2}{2} \| \bm{u} \|^2 \right] + \zeta\sum_{i=1}^{d} \mathbb{E}_{u_i \sim \mathcal{N}\left( 0; \frac{1}{\sqrt{d}} \right)}\left[ \sin^2(x_i - \delta u_i) \right] \\
&=\frac{1}{2}\| \bm{x} \|^2 + \frac{\delta^2}{2} + \zeta\sum_{i=1}^{d} \mathbb{E}_{u_i \sim \mathcal{N}\left( 0; \frac{1}{\sqrt{d}} \right)}\left[ \sin^2(x_i - \delta u_i) \right].
\end{align*}
where we use $\mathbb{E}_{\bm{u}}[\bm{u}] = \bm{0}$ and $\mathbb{E}_{\bm{u}}[\| \bm{u} \|^2] = 1$. Here, we have
\begin{align*}
\mathbb{E}_{u_i \sim \mathcal{N}\left( 0; \frac{1}{\sqrt{d}} \right)}\left[ \sin^2(x_i - \delta u_i) \right]
&= \frac{1}{2} \mathbb{E}_{u_i \sim \mathcal{N}\left( 0; \frac{1}{\sqrt{d}} \right)} \left[ 1-\cos(2x_i - 2\delta u_i))\right] \\
&= \frac{1}{2} - \frac{1}{2}\mathbb{E}_{u_i \sim \mathcal{N}\left( 0; \frac{1}{\sqrt{d}} \right)} \left[ \cos(2x_i)\cos(2\delta u_i) + \sin(2x_i)\sin(2\delta u_i)\right] \\
&= \frac{1}{2} - \frac{1}{2}e^{-\frac{2\delta^2}{d}}\cos(2x_i).
\end{align*}
Therefore, we have
\begin{align}\label{eq:func}
\hat{f}_{\delta}(\bm{x}) = \frac{1}{2}\| \bm{x} \|^2 - \frac{\zeta}{2}e^{-\frac{2\delta^2}{d}} \sum_{i=1}^{d} \cos(2x_i) + \frac{\delta^2}{2} + \frac{\zeta d}{2}.
\end{align}
Hence, its gradient and Hessian can be derived as follows:
\begin{align}
&\nabla \hat{f}_{\delta}(\bm{x}) = \bm{x} + \zeta e^{-\frac{2\delta^2}{d}}
\begin{pmatrix}
 \sin(2x_1) \\
 \vdots \\
 \sin(2x_d)
\end{pmatrix},\label{eq:grad} \\ 
&\nabla^2 \hat{f}_{\delta}(\bm{x}) = 
\begin{pmatrix}
1+2\zeta e^{-\frac{2\delta^2}{d}}\cos(2x_1) & & 0 \\
 & \ddots & \\
 0 & & 1+2\zeta e^{-\frac{2\delta^2}{d}}\cos(2x_d) \label{eq:hesse}
\end{pmatrix}.
\end{align}

\paragraph{Weakly $(\mu_1, \mu_2)$-nice property (i)} For any $\delta > 0$, $\hat{f}_{\delta}$ has a global minimum at $\bm{x}=\bm{0}$, so $\bm{x}_{\delta}^\star = \bm{0}$ holds for all $\delta>0$. Thus, for all $\delta_m > 0$ and all $\gamma \in [0.5,1)$, 
\begin{align*}
\| \bm{x}_{\delta_m}^\star - \bm{x}_{\delta_{m+1}}^\star \|
= \| \bm{0} - \bm{0} \|
= 0
\leq (1-\gamma) \delta_m,
\end{align*} 
which implies that the weakly $(\mu_1, \mu_2)$-nice property (i) holds.

\paragraph{Weakly $(\mu_1, \mu_2)$-nice property (ii)} First, let us confirm the smoothness of $\hat{f}_{\delta_m}$. From Equation \eqref{eq:hesse}, for all $\delta>0$,
\begin{align*}
\left\| \nabla^2 \hat{f}_{\delta}(\bm{x}) \right\|_2 
= \max_{i \in [d]} \left\{ 1 + 2\zeta e^{-\frac{2\delta^2}{d}}\cos(2x_i) \right\}
\leq 1 + 2\zeta,
\end{align*}
where we use $e^{-\frac{2\delta^2}{d}} \leq 1$ and $\cos(2x_i) \leq 1$. Therefore, for all $\delta_m > 0$, $\hat{f}_{\delta_m}$ is $(1+2\zeta)$-smooth. 
\noindent
Next, let us examine the unique $\mu_2$-QG condition: $\hat{f}_{\delta}(\bm{x}) - \hat{f}_{\delta}(\bm{x}_{\delta}^\star) \geq \frac{\mu_2}{2}\| \bm{x} - \bm{x}_{\delta}^\star \|^2$ $(\mu_2>0)$. From equation \eqref{eq:func}, for all $\delta>0$, 
\begin{align}\label{eq:funcd}
\hat{f}_{\delta}(\bm{x}) - \hat{f}_{\delta}(\bm{x}_\delta^\star)
= \frac{1}{2}\| \bm{x} \|^2 + \frac{\zeta}{2}e^{-\frac{2\delta^2}{d}} \sum_{i=1}^{d} \left( 1 - \cos(2x_i) \right)
= \frac{1}{2}\| \bm{x} \|^2 + \zeta e^{-\frac{2\delta^2}{d}} \sum_{i=1}^{d} \sin^2(x_i).%
\end{align}
On the other hand, from $\bm{x}_{\delta}^\star = \bm{0}$, for all $\delta>0$,
\begin{align*}
\frac{1}{2}\| \bm{x} - \bm{x}_{\delta}^\star \|^2 = \frac{1}{2}\| \bm{x} \|^2.
\end{align*}
Hence, for all $\delta_m > 0$, $\hat{f}_{\delta_m}$ satisfies the unique $\mu_2$-QG condition, where $\mu_2 \in (0,1]$.
\noindent
Finally, let us examine the $\mu_1$-PL condition: $\frac{1}{2}\| \nabla \hat{f}_{\delta}(\bm{x}) \|^2 \geq \mu_1(\hat{f}_{\delta}(\bm{x}) - \hat{f}_{\delta}(\bm{x}_{\delta}^\star))$ $(\mu_1 > 0)$. From equation \eqref{eq:grad}, for all $\delta>0$, 
\begin{align*}
\frac{1}{2}\| \nabla \hat{f}_{\delta}(\bm{x}) \|^2 
= \frac{1}{2}\sum_{i=1}^{d} \left( x_i + \zeta e^{-\frac{2\delta^2}{d}} \sin(2x_i) \right)^2
\geq \frac{1}{2}\sum_{i=1}^{d} \left( x_i + \zeta e^{-2\delta^2} \sin(2x_i) \right)^2.
\end{align*}
On the other hand, from equation \eqref{eq:funcd}, for all $\delta>0$,
\begin{align*}
\hat{f}_{\delta}(\bm{x}) - \hat{f}_{\delta}(\bm{x}_\delta^\star)
= \sum_{i=1}^{d} \left( \frac{1}{2}x_i^2 + \zeta e^{-\frac{2\delta^2}{d}} \sin^2(x_i) \right)
\leq \sum_{i=1}^{d} \left( \frac{1}{2}x_i^2 + \zeta \right)
\end{align*}
For all $x \in B(0;3\delta) \setminus \{0\}$, let us consider the following function:
\begin{align*}
g_1(x) := \frac{(x + \zeta e^{-2\delta^2}\sin(2x))^2}{x^2 + 2\zeta}.
\end{align*}
For the $\mu_1$-PL condition to hold, it suffices that $g(x)$ has a positive lower bound for any $\delta>0$. From $x \in B(0;3\delta) \setminus \{ 0\}$, we have
\begin{align*}
g_1(x) \geq \frac{(x + \zeta e^{-2\delta^2}\sin(2x))^2}{9\delta^2 + 2\zeta} =: \frac{(g_2(x))^2}{9\delta^2 + 2\zeta}.
\end{align*}
When $g_2(x)$ has no solutions other than $x = 0$, $g_1(x)$ clearly has a positive lower bound. Consider the case where $g_2(x) = 0$ has solutions other than $x = 0$. Let $x^*$ denote the smallest positive solution among these. If $x^*$ satisfies $x^* \not\in B(0;3\delta)$, then $g_1(x)$ has a positive lower bound with $x \in B(0;3\delta)$. We wish to find the largest possible $\zeta>0$ such that $x^*$ satisfies $x^* \not\in B(0;3\delta)$ for any given $\delta>0$. The condition $x^* \not\in B(0;3\delta)$ ceases to hold precisely when $x^* = 3\delta$. Therefore, we need only consider $g_2(3\delta)=3\delta + \zeta e^{-2\delta^2}\sin(6\delta) = 0$, i.e., 
\begin{align*}
\zeta(\delta) = \frac{-3\delta}{e^{-2\delta^2}\sin(6\delta)}\ \ \left(\frac{\pi}{6} < \delta < \frac{\pi}{3} \right).
\end{align*}
That is, for any given $\delta$, if $\zeta < \zeta(\delta)$, then either $x^*$ does not exist or $x^* \not\in B(0;3\delta)$ holds. If $\zeta > \zeta(\delta)$, then $x^* \in B(0;3\delta)$ holds. If $\zeta = \zeta(\delta)$, then $x^* = 3\delta$, i.e., $x^* \in B(0;3\delta)$ holds. Numerical calculations indicate that $\zeta(\delta)$ has a minimum value of $6.374\cdots$, so setting $\zeta < 6.374\cdots$ ensures that $g_1(x)$ possesses a positive lower bound for any $\delta>0$. Consequently, the $\mu_1$-PL condition is satisfied. From the above, if $\zeta < 6.374\cdots$, then $f$ is a weakly $(\mu_1, \mu_2)$-nice function.

\paragraph{$\sigma$-nice property} Similarly to the discussion on the weakly $(\mu_1, \mu_2)$-nice property (i), the $\sigma$-nice property (i) is satisfied. Let us consider the $\sigma$-nice property (ii). From equation \eqref{eq:hesse}, 
\begin{align*}
\left\| \nabla^2 \hat{f}_{\delta}(\bm{x}) \right\|_2
= \max_{i \in [d]} \left\{ 1 + 2\zeta e^{-\frac{2\delta^2}{d}}\cos(2x_i) \right\}
\geq 1 - 2\zeta e^{-\frac{2\delta^2}{d}}
\geq 1-2\zeta.
\end{align*}
For $\hat{f}_{\delta}$ to be strongly convex for any $\delta>0$, it is necessary that $1-2\zeta> 0$; i.e., $\zeta < \frac{1}{2}$ holds. Therefore, if $\zeta < 0.5$, then $f$ is a $(1-2\zeta)$-nice function.

\section{Proof of the Theorems and Propositions}
\subsection{Proof of Proposition \ref{prop:heavy}}\label{subsec:e.1}
\begin{proof}
From $|f(\bm{x})| = \mathcal{O}\left( \| \bm{x} \|^p \right)$ and Jensen's inequality,
\begin{align*}
|f(\bm{x} - \delta \bm{u})|
\leq C_1 \| \bm{x} - \delta \bm{u} \|^p
\leq C_1 \left( \|\bm{x}\| + \delta \| \bm{u} \| \right)^p
\leq C_1 \max\{1, 2^{p-1}\} \left( \| \bm{x} \|^p + \delta^p \| \bm{u} \|^p \right),
\end{align*}
where $C_1 > 0$ is a positive constant. Hence, we have
\begin{align*}
\mathbb{E}_{\bm{u}} \left[ |f(\bm{x} - \delta \bm{u})|\right]
\leq C_1 \max\{1, 2^{p-1}\} (\| \bm{x} \|^p + \delta^p \mathbb{E}_{\bm{u}}\left[ \| \bm{u} \|^p\right]).
\end{align*}
Therefore, it suffices to show that $\mathbb{E}_{\bm{u}}\left[ \| \bm{u} \|^p \right] < \infty$ holds.

\paragraph{(i) when $\bm{u}$ follows a light-tailed distribution.} Since $\bm{u}$ follows a light-tailed distribution, there exists $r > 0$, such that
\begin{align*}
\mathbb{E}_{\bm{u}}\left[ e^{r \| \bm{u} \|}\right] < \infty.
\end{align*} 
From the Taylor expansion, for all $p \geq 0$, we have
\begin{align*}
e^{r \| \bm{u} \|}
= \sum_{k=0}^{\infty} \frac{(r\| \bm{u} \|)^k}{k!}
\geq \frac{(r \| \bm{u} \|)^{\lceil p \rceil}}{\lceil p \rceil !}
\geq \frac{r^{\lceil p \rceil}}{\lceil p \rceil !}\left( \| \bm{u} \|^p -1 \right).
\end{align*}
Therefore, for all $p \geq 0$, we have
\begin{align*}
\mathbb{E}_{\bm{u}}\left[ \| \bm{u} \|^p \right]
\leq \frac{\lceil p \rceil !}{r^{\lceil p \rceil}} \left( \mathbb{E}_{\bm{u}}\left[ e^{r \| \bm{u} \|} \right] + 1 \right)
<\infty.
\end{align*}
This completes the proof for light-tailed distributions.

\paragraph{(ii) when $\bm{u}$ follows a heavy-tailed distribution with tail index $\alpha > p$.} By the layer cake representation,
\begin{align*}
\mathbb{E}_{\bm{u}}[\| \bm{u} \|^p] 
= p \int_{0}^{\infty} t^{p-1} P(\| \bm{u} \| > t) dt
= p \int_{0}^{T} t^{p-1} P(\| \bm{u} \| > t) dt + p \int_{T}^{\infty} t^{p-1} P(\| \bm{u} \| > t) dt 
\end{align*}
Since $\bm{u}$ follows a heavy-tailed distribution, we have $P(\| \bm{u} \| > t) \leq c_0 t^{-\alpha}$. Therefore, from $\alpha > p$,
\begin{align*}
\int_{T}^{\infty} t^{p-1} P(\| \bm{u} \| > t) dt 
\leq c_0 \int_{T}^{\infty} t^{p-\alpha-1} dt
= \frac{c_0}{\alpha-p}\cdot \frac{1}{T^{\alpha-p}}
< \infty.
\end{align*}
Hence, we have
\begin{align*}
\mathbb{E}_{\bm{u}}[\| \bm{u} \|^p] < \infty.
\end{align*}
This completes the proof for heavy-tailed distributions.
\end{proof}
}

\UPDATE{
\subsection{Lemmas for proof of Theorem \ref{thm:03}}
\begin{lem}\label{lem:xex}
Let $X \in \mathbb{R}^d$ be a random vector such that $\mathbb{E}[\|X\|^\alpha] < \infty$ for some $\alpha \geq 1$. For any constant vector $\bm{c} \in \mathbb{R}^d$ (or any random vector $\bm{c}$ independent of $X$), the following inequality holds:
\begin{align*}
\mathbb{E}[\|X - \mathbb{E}[X]\|^\alpha] \leq 2^\alpha \mathbb{E}[\|X - \bm{c}\|^\alpha].
\end{align*}
\end{lem}
\begin{proof}
By the triangle inequality and the linearity of expectation, we have
\begin{align*}
\| X - \mathbb{E}[X] \|
= \left\| \left( X-\bm{c} \right) - \left( \mathbb{E}[X] - \bm{c}\right) \right\|\
\leq \| X - \bm{x} \| + \| \mathbb{E}[X - \bm{c}] \|.
\end{align*}
Raising both sides to the power of $\alpha$ and applying the convexity of $\| \cdot \|^\alpha$, we have
\begin{align*}
\| X - \mathbb{E}[X] \|^\alpha
\leq 2^{\alpha-1} \left( \| X - \bm{c} \|^\alpha + \| \mathbb{E}[X - \bm{c}] \|^\alpha \right).
\end{align*}
Taking the expectation on both sides yields
\begin{align*}
\mathbb{E}[\|X - \mathbb{E}[X]\|^\alpha] 
\leq 2^{\alpha-1} \left( \mathbb{E}[\|X - \bm{c}\|^\alpha] + \mathbb{E}[\|\mathbb{E}[X - \bm{c}]\|^\alpha] \right)
\end{align*}
By Jensen's inequality, we have $\|\mathbb{E}[X - \bm{c}]\|^\alpha \leq \mathbb{E}[\|X - \bm{c}\|^\alpha]$. Therefore, 
\begin{align*}
\mathbb{E}[\|X - \mathbb{E}[X]\|^\alpha] 
\leq 2^{\alpha-1} \left( \mathbb{E}[\|X - \bm{c}\|^\alpha] + \mathbb{E}[\|X - \bm{c}\|^\alpha] \right) = 2^\alpha \mathbb{E}[\|X - \bm{c}\|^\alpha].
\end{align*}
This completes the proof.
\end{proof}

\begin{lem}
[Bound on the Virtual Noise Moment]\label{lem:nu} Let $\bm{y}_{t} := \bm{x}_t - \eta \nabla f(\bm{x}_t), \bm{x}_{t+1} := \bm{x}_t - \eta \nabla f_{\mathcal{S}_t}(\bm{x}_t)$, and $\bm{\omega}_t := \nabla f_{\mathcal{S}_t}(\bm{x}_t) - \nabla f(\bm{x}_t)$. Then, the virtual noise $\bm{\nu}_t := \bm{y}_{t+1} - \mathbb{E}_{\bm{\omega}_t}\left[ \bm{y}_{t+1}\right]$ satisfies the following bound:
\begin{align*}
\mathbb{E}\left[ \| \bm{\nu}_t \|^\alpha \right]
&\leq 2^{\alpha-1}\eta^\alpha (1+2^\alpha L_g^\alpha) \mathbb{E}\left[ \| \bm{\omega}_t \|^\alpha \right] \\
&\leq 2^{\alpha-1} (1+2^\alpha L_g^\alpha) \delta^\alpha.
\end{align*}
\end{lem}
\begin{proof}
From the definition of $\bm{y}_{t}$, we have
\begin{align*}
\bm{y}_{t+1} 
&= \bm{x}_{t+1} - \eta \nabla f(\bm{x}_{t+1})
= \bm{x}_t - \eta \nabla f_{\mathcal{S}_t}(\bm{x}_t) - \eta \nabla f(\bm{x}_{t+1})\\
&= \bm{x}_t - \eta (\bm{\omega}_t + \nabla f(\bm{x}_t)) - \eta \nabla f(\bm{x}_{t+1}).
\end{align*} 
Therefore, we have
\begin{align*}
\mathbb{E}_{\bm{\omega}_t}\left[ \bm{y}_{t+1}\right]
= \bm{x}_t - \eta \nabla f(\bm{x}_t) - \eta \mathbb{E}\left[ \nabla f(\bm{x}_{t+1})\right],
\end{align*}
and
\begin{align*}
\bm{\nu}_t = -\eta \bm{\omega}_t - \eta \left(\nabla f(\bm{x}_{t+1}) - \mathbb{E}_{\bm{\omega}_t}\left[ \nabla f(\bm{x}_{t+1})\right] \right).
\end{align*}
Hence, from the convexity of $\| \cdot \|^\alpha$,
\begin{align*}
\mathbb{E}\left[ \| \bm{\nu}_t \|^\alpha \right]
\leq 2^{\alpha-1} \eta^\alpha \left( \mathbb{E}\left[ \| \bm{\omega}_t \|^\alpha \right] + \mathbb{E}\left[ \| \nabla f(\bm{x}_{t+1}) - \mathbb{E}_{\bm{\omega}_t}\left[ \nabla f(\bm{x}_{t+1})\right] \|^\alpha \right] \right).
\end{align*}
For the second term, using Lemma \ref{lem:xex}, we have 
\begin{align*}
\mathbb{E}\left[ \| \nabla f(\bm{x}_{t+1}) - \mathbb{E}_{\bm{\omega}_t}\left[ \nabla f(\bm{x}_{t+1})\right] \|^\alpha \right] 
\leq 2^\alpha \mathbb{E}\left[ \| \nabla f(\bm{x}_{t+1}) - \nabla f(\bm{y}_t) \|^\alpha \right]
\end{align*}
where we choose $\bm{c} = \nabla f(\bm{y}_t)$. From the $L_g$-smoothness of $f$,
\begin{align*}
\mathbb{E}\left[ \| \nabla f(\bm{x}_{t+1}) - \mathbb{E}_{\bm{\omega}_t}\left[ \nabla f(\bm{x}_{t+1})\right] \|^\alpha \right] 
&\leq 2^\alpha L_g^\alpha \mathbb{E}\left[ \| \bm{x}_{t+1} - \bm{y}_t \|^\alpha \right] \\
&= 2^\alpha L_g^\alpha \mathbb{E}\left[ \| \bm{\omega}_t \|^\alpha \right].
\end{align*}
Hence, 
\begin{align*}
\mathbb{E}\left[ \| \bm{\nu}_t \|^\alpha \right]
\leq 2^{\alpha-1}\eta^\alpha (1+2^\alpha L_g^\alpha) \mathbb{E}\left[ \| \bm{\omega}_t \|^\alpha \right].
\end{align*}
In addition, from Lemma \ref{lem:00} and $\delta := \frac{\eta C_\alpha^{\frac{1}{\alpha}} \tau}{b^{\frac{\alpha-1}{\alpha}}}$, 
\begin{align*}
\mathbb{E}\left[ \| \bm{\nu}_t \|^\alpha \right]
\leq 2^{\alpha-1}\eta^\alpha (1+2^\alpha L_g^\alpha) \frac{C_\alpha \tau^\alpha}{b^{\alpha-1}}
= 2^{\alpha-1}(1+2^\alpha L_g^\alpha) \delta^\alpha.
\end{align*}
This completes the proof.
\end{proof}

\begin{lem}
[Generalized Descent Lemma, Lemma 1 in \citep{Yashtini2016Ont}] \label{lem:Holder} Assume that the function $f \colon \mathbb{R}^d \to \mathbb{R}$ satisfies the H\"{o}lder condition; i.e., there is a $\upsilon \in (0,1]$ and an $K>0$ such that, for all $\bm{x}, \bm{y} \in \mathbb{R}^d$, $\| \nabla f(\bm{x}) - \nabla f(\bm{y}) \| \leq K \| \bm{x} - \bm{y} \|^\upsilon$. Then, the following inequality holds.
\begin{align*}
f(\bm{y}) \leq f(\bm{x}) + \langle \nabla f(\bm{x}), \bm{y}-\bm{x} \rangle + \frac{K}{\upsilon + 1} \| \bm{y} - \bm{x} \|^{\upsilon+1}.
\end{align*}
\end{lem}

\subsection{Proof of Theorem \ref{thm:03}}
\label{subsec:9.3}
\begin{proof}
From the $L_g$-smoothness of $\hat{f}_{\delta_m}$, for all $\bm{x}, \bm{y} \in \mathbb{R}^d$, we have
\begin{align*}
\| \nabla \hat{f}_{\delta_m}(\bm{x}) - \nabla \hat{f}_{\delta_m}(\bm{y}) \| 
\leq L_g \| \bm{x} - \bm{y} \|
\Longleftrightarrow
\| \nabla \hat{f}_{\delta_m}(\bm{x}) - \nabla \hat{f}_{\delta_m}(\bm{y}) \|^{\alpha-1} 
\leq L_g^{\alpha-1} \| \bm{x} - \bm{y} \|^{\alpha-1}.
\end{align*}
In addition, from the $L_f$-Lipschitz continuity of $\hat{f}_{\delta_m}$, for all $\bm{x}, \bm{y} \in \mathbb{R}^d$, we have
\begin{align*}
&\| \nabla \hat{f}_{\delta_m}(\bm{x}) - \nabla \hat{f}_{\delta_m}(\bm{y}) \| 
\leq \| \nabla \hat{f}_{\delta_m}(\bm{x}) \| + \| \nabla \hat{f}_{\delta_m}(\bm{y}) \|
\leq 2L_f \\
&\quad
\Longleftrightarrow
\| \nabla \hat{f}_{\delta_m}(\bm{x}) - \nabla \hat{f}_{\delta_m}(\bm{y}) \|^{2-\alpha} \leq (2L_f)^{2-\alpha}.
\end{align*}
Multiplying these inequalities, for all $\bm{x}, \bm{y} \in \mathbb{R}^d$, gives
\begin{align*}
\| \nabla \hat{f}_{\delta_m}(\bm{x}) - \nabla \hat{f}_{\delta_m}(\bm{y}) \| 
\leq \underbrace{L_g^{\alpha-1}(2L_f)^{2-\alpha}}_{=:K}\| \bm{x}-\bm{y}\|^{\alpha-1}.
\end{align*}
Therefore, from Lemma \ref{lem:Holder} and $\hat{\bm{x}}_{t+1}^{(m)} := \hat{\bm{x}}_t^{(m)} - \eta_m \nabla \hat{f}_{\delta_m}(\hat{\bm{x}}_t^{(m)}) + \bm{\nu}_t^{(m)}$, we have
\begin{align*}
\hat{f}_{\delta_m}(\hat{\bm{x}}_{t+1}^{(m)})
&\leq \hat{f}_{\delta_m}(\hat{\bm{x}}_{t}^{(m)}) + \langle \nabla \hat{f}_{\delta_m}(\hat{\bm{x}}_t^{(m)}), \hat{\bm{x}}_{t+1}^{(m)} - \hat{\bm{x}}_t^{(m)} \rangle + \frac{K}{\alpha}\| \hat{\bm{x}}_{t+1}^{(m)} - \hat{\bm{x}}_t^{(m)} \|^\alpha \\
&= \hat{f}_{\delta_m}(\hat{\bm{x}}_{t}^{(m)}) - \langle \nabla \hat{f}_{\delta_m}(\hat{\bm{x}}_t^{(m)}), \eta_m \nabla \hat{f}_{\delta_m}(\hat{\bm{x}}_t^{(m)}) - \bm{\nu}_t^{(m)} \rangle + \frac{K}{\alpha}\| \eta_m \nabla \hat{f}_{\delta_m}(\hat{\bm{x}}_t^{(m)}) - \bm{\nu}_t^{(m)} \|^\alpha \\
&\leq \hat{f}_{\delta_m}(\hat{\bm{x}}_{t}^{(m)}) - \eta_m \| \nabla \hat{f}_{\delta_m}(\hat{\bm{x}}_t^{(m)}) \|^2 + \langle \nabla \hat{f}_{\delta}(\hat{\bm{x}}_t^{(m)}), \bm{\nu}_t^{(m)} \rangle + \frac{K}{\alpha}\| \eta_m \nabla \hat{f}_{\delta_m}(\hat{\bm{x}}_t^{(m)}) - \bm{\nu}_t^{(m)} \|^\alpha.
\end{align*}
By taking expectations, 
\begin{align*}
\mathbb{E}\left[ \hat{f}_{\delta_m}(\hat{\bm{x}}_{t+1}^{(m)}) \right]
\leq \mathbb{E}\left[ \hat{f}_{\delta_m}(\hat{\bm{x}}_{t}^{(m)}) \right] - \eta_m \mathbb{E}\left[ \| \nabla \hat{f}_{\delta_m}(\hat{\bm{x}}_t^{(m)}) \|^2\right] + \frac{K}{\alpha} \mathbb{E}\left[ \| \eta_m \nabla \hat{f}_{\delta_m}(\hat{\bm{x}}_t^{(m)}) - \bm{\nu}_t^{(m)} \|^\alpha \right].
\end{align*}
Here, from $\| \bm{x} - \bm{y} \|^\alpha \leq \| \bm{x} \|^\alpha - \alpha \| \bm{x} \|^{\alpha-2} \langle \bm{x}, \bm{y} \rangle + \| \bm{y} \|^\alpha $, we have
\begin{align*}
\| \eta_m \nabla \hat{f}_{\delta_m}(\hat{\bm{x}}_t^{(m)}) - \bm{\nu}_t^{(m)} \|^\alpha
\leq \eta_m^\alpha \| \nabla \hat{f}_{\delta_m}(\hat{\bm{x}}_t^{(m)}) \|^\alpha - \alpha \eta_m^\alpha \| \nabla \hat{f}_{\delta_m}(\hat{\bm{x}}_t^{(m)}) \|^{\alpha-2} \langle \nabla \hat{f}_{\delta_m}(\hat{\bm{x}}_t^{(m)}), \bm{\nu}_t^{(m)}\rangle + \| \bm{\nu}_t^{(m)} \|^\alpha.
\end{align*}
In addition, from Young's inequality $ab \leq \frac{a^p}{p} + \frac{b^q}{q}$ $\left(\frac{1}{p} + \frac{1}{q} =1, p\geq1, q\geq 1\right)$, 
\begin{align*}
\| \nabla \hat{f}_{\delta_m}(\bm{x}_t^{(m)}) \|^{\alpha} \cdot 1
\leq \frac{\alpha}{2}\| \nabla \hat{f}_{\delta_m}(\bm{x}_t^{(m)}) \|^2 + \frac{2- \alpha}{2},
\end{align*}
where we choose $a = \| \nabla \hat{f}_{\delta_m}(\bm{x}_t^{(m)}) \|^\alpha, b=1, p=\frac{2}{\alpha}$, and $q= \frac{2}{2-\alpha}$. Therefore, from Lemma \ref{lem:nu},
\begin{align*}
\mathbb{E}\left[ \| \eta_m \nabla \hat{f}_{\delta_m}(\hat{\bm{x}}_t^{(m)}) - \bm{\nu}_t^{(m)} \|^\alpha \right]
&\leq \eta_m^\alpha \mathbb{E}\left[ \| \nabla \hat{f}_{\delta_m}(\hat{\bm{x}}_t^{(m)}) \|^\alpha \right] + \mathbb{E}\left[ \| \bm{\nu}_t^{(m)} \|^\alpha \right] \\
&\leq \frac{\alpha \eta_m^\alpha }{2} \mathbb{E}\left[ \| \nabla \hat{f}_{\delta_m}(\bm{x}_t^{(m)}) \|^2 \right] + \frac{(2-\alpha)\eta_m^\alpha}{2} + 2^{\alpha-1}(1+2^\alpha L_g^\alpha) \delta_m^\alpha
\end{align*}
Hence, we have
\begin{align*}
\mathbb{E}\left[ \hat{f}_{\delta_m}(\hat{\bm{x}}_{t+1}^{(m)}) \right]
&\leq \mathbb{E}\left[ \hat{f}_{\delta_m}(\hat{\bm{x}}_{t}^{(m)}) \right] - \eta_m\left( 1- \frac{K\eta_m^{\alpha-1}}{2} \right) \mathbb{E}\left[ \| \nabla \hat{f}_{\delta_m}(\bm{x}_t^{(m)}) \|^2 \right] \\
&\quad + \frac{K}{\alpha} \left\{ \frac{(2-\alpha)\eta_m^\alpha}{2} + 2^{\alpha-1}(1+2^\alpha L_g^\alpha) \delta_m^\alpha \right\}
\end{align*}
On the other hand, from $\mu_1$-PL condition, we have
\begin{align*}
\frac{1}{2}\| \nabla \hat{f}_{\delta_m}(\hat{\bm{x}}_t^{(m)}) \|^2
\geq \mu_1 \left(\hat{f}_{\delta_m}(\hat{\bm{x}}_t^{(m)}) - \hat{f}_{\delta_m}(\bm{x}_{\delta_m}^\star) \right).
\end{align*}
Therefore, we have
\begin{align*}
\mathbb{E}\left[ \hat{f}_{\delta_m}(\hat{\bm{x}}_{t+1}^{(m)}) \right]
&\leq \mathbb{E}\left[ \hat{f}_{\delta_m}(\hat{\bm{x}}_t^{(m)}) \right] - 2\mu_1 \eta_m\left( 1- \frac{K\eta_m^{\alpha-1}}{2} \right) \left( \mathbb{E}\left[ \hat{f}_{\delta_m}(\hat{\bm{x}}_t^{(m)}) \right] - \hat{f}_{\delta_m}(\hat{\bm{x}}_{\delta_m}^\star)\right) \\
&\quad+ \underbrace{\frac{K}{\alpha} \left\{ \frac{(2-\alpha)\eta_m^\alpha}{2} + 2^{\alpha-1}(1+2^\alpha L_g^\alpha) \delta_m^\alpha \right\}}_{=: \rho_1}.
\end{align*}
Subtracting $\hat{f}_{\delta_m}(\hat{\bm{x}}_{\delta_m}^\star)$ from both sides yields
\begin{align*}
&\mathbb{E}\left[ \hat{f}_{\delta_m}(\hat{\bm{x}}_{t+1}^{(m)}) \right] - \hat{f}_{\delta_m}(\bm{x}_{\delta_m}^\star) \\
&\quad\leq \mathbb{E}\left[ \hat{f}_{\delta_m}(\hat{\bm{x}}_t^{(m)}) \right] - \hat{f}_{\delta_m}(\bm{x}_{\delta_m}^\star)
-2\mu_1 \eta_m\left( 1- \frac{K\eta_m^{\alpha-1}}{2} \right) \left( \mathbb{E}\left[ \hat{f}_{\delta_m}(\hat{\bm{x}}_t^{(m)}) \right] - \hat{f}_{\delta_m}(\hat{\bm{x}}_{\delta_m}^\star)\right)
+ \rho_1 \\
&\quad= \underbrace{\left( 1 - 2\mu_1 \eta_m\left( 1- \frac{K\eta_m^{\alpha-1}}{2}\right) \right)}_{=: \rho_2} \left( \mathbb{E}\left[ \hat{f}_{\delta_m}(\hat{\bm{x}}_t^{(m)}) \right] - \hat{f}_{\delta_m}(\bm{x}_{\delta_m}^\star)\right) + \rho_1 \\
&\quad\leq \rho_2^{t} \left( \hat{f}_{\delta_m}(\hat{\bm{x}}_1^{(m)}) - \hat{f}_{\delta_m}(\bm{x}_{\delta_m}^\star)\right) + \frac{\rho_1}{1-\rho_2},
\end{align*}
where $|\rho_2| < 1$ since $\eta_m < \min\left\{ \frac{1}{\mu_1}, \left( \frac{2}{K} \right)^{\frac{1}{\alpha-1}}\right\}$. Summing over $t$, we find that
\begin{align*}
\sum_{t=1}^{T} \left( \mathbb{E}\left[ \hat{f}_{\delta_m}(\hat{\bm{x}}_{t+1}^{(m)}) \right] - \hat{f}_{\delta_m}(\bm{x}_{\delta_m}^\star) \right)
&\leq \left( \hat{f}_{\delta_m}(\hat{\bm{x}}_1^{(m)}) - \hat{f}_{\delta_m}(\bm{x}_{\delta_m}^\star)\right) \sum_{t=1}^{T} \rho_2^t + \frac{\rho_1 T}{(1-\rho_2)}\\
&\leq \frac{ \hat{f}_{\delta_m}(\hat{\bm{x}}_1^{(m)}) - \hat{f}_{\delta_m}(\bm{x}_{\delta_m}^\star)}{1-\rho_2} + \frac{\rho_1 T}{(1-\rho_2)}.
\end{align*}
Here, from $L_f$-Lipschitzness of $\hat{f}_{\delta_m}$ and the unique $\mu_2$-QG condition, we have 
\begin{align*}
\hat{f}_{\delta_m}(\hat{\bm{x}}_1^{(m)}) - \hat{f}_{\delta_m}(\bm{x}_{\delta_m}^\star) 
\leq \frac{2L_f^2}{\mu_2}.
\end{align*}
Hence,
\begin{align*}
&\frac{1}{T} \sum_{t=1}^{T} \left( \mathbb{E}\left[ \hat{f}_{\delta_m}(\hat{\bm{x}}_{t+1}^{(m)}) \right] - \hat{f}_{\delta_m}(\bm{x}_{\delta_m}^\star) \right) \\
&\quad\leq \frac{2L_f^2}{\mu_2 (1-\rho_2)} \cdot \frac{1}{T} + \frac{\rho_1}{(1-\rho_2)} \\
&\quad= \underbrace{\frac{2L_f^2}{\mu_1\mu_2\left( 2-K\eta_m^{\alpha-1}\right)}}_{=: H_m} \cdot \frac{1}{T} + \underbrace{\frac{K\left\{ (2-\alpha)\eta_m^\alpha + 2^{\alpha}(1+2^\alpha L_g^\alpha) \delta_m^\alpha \right\}}{\alpha\mu_1\mu_2\left( 2-K\eta_m^{\alpha-1}\right)}}_{=:I_m},
\end{align*}
where $H_m > 0$ and $I_m > 0$ are nonnegative constants. Since the minimum value is smaller than the mean, we have
\begin{align*}
\min_{t \in [T]} \left( \mathbb{E}\left[ \hat{f}_{\delta_m} \left(\hat{\bm{x}}_t^{(m)} \right) \right] - \hat{f}_{\delta_m}(\bm{x}_{\delta_m}^\star) \right)
\leq \frac{H_m}{T} + I_m
= \mathcal{O} \left( \frac{1}{T} + \delta_m^\alpha \right).
\end{align*}
This completes the proof.
\end{proof}
}

\subsection{Proof of Proposition \ref{prop:999}}\label{sec:E.3}
\begin{proof}
This proposition can be proved by induction. Since we assume $\bm{x}_1 \in B(\bm{x}_{\delta_1}^\star; 3\delta_1)$, we have
\begin{align*}
\left\| \bm{x}_1 - \bm{x}_{\delta_1}^\star \right\| < 3\delta_1,
\end{align*}
which establishes the case of $m=1$. Now let us assume that the proposition holds for any $m>1$. Accordingly, the initial point $\bm{x}_m$ for the optimization of the $m$-th smoothed function $\hat{f}_{\delta_m}$ and its global optimal solution $\bm{x}_{\delta_m}^\star$ are both contained in the local favorable region $B(\bm{x}_{\delta_m}^\star; 3\delta_m)$. Thus, after $T_m := H_m/(\epsilon_m - I_m)$ iterations, Algorithm \ref{alg:sgd2} (GD) returns an approximate solution $\hat{\bm{x}}_{T_m +1}^{(m)} =: \bm{x}_{m+1}$, and the following holds from Theorem \ref{thm:03}:
\begin{align*}
\mathbb{E}\left[ \hat{f}_{\delta_m}(\bm{x}_{m+1}) \right] - \hat{f}_{\delta_m}(\bm{x}_{\delta_m}^\star) \leq \frac{H_m}{T_m} + I_m = \epsilon_m := \frac{\mu_2 \delta_m^2}{2} = \frac{\mu_2 \delta_{m+1}^2}{2\gamma^2}.
\end{align*}
Hence, from the unique $\mu_2$-QG of $\hat{f}_{\delta_m}$, 
\begin{align*}
\frac{\mu_2}{2} \mathbb{E}\left[ \left\| \bm{x}_{m+1} - \bm{x}_{\delta_m}^\star \right\|^2 \right] 
\leq \frac{\mu_2 \delta_{m+1}^2}{2\gamma^2}, \ \text{i.e., }
\mathbb{E}\left[ \left\| \bm{x}_{m+1} - \bm{x}_{\delta_m}^\star \right\| \right]
\leq \frac{\delta_{m+1}}{\gamma}
\end{align*}
Therefore, from the weakly $(\mu_1, \mu_2)$-nice property (i) and $\gamma \in [0.5,1)$, 
\begin{align*}
\mathbb{E}\left[ \left\| \bm{x}_{m+1} - \bm{x}_{\delta_{m+1}}^\star \right\| \right]
&\leq \mathbb{E}\left[ \left\| \bm{x}_{m+1} - \bm{x}_{\delta_m}^\star \right\| \right] + \mathbb{E}\left[ \left\| \bm{x}_{\delta_m}^\star - \bm{x}_{\delta_{m+1}}^\star \right\| \right] \\
&\leq \frac{\delta_{m+1}}{\gamma} + (1-\gamma) \delta_m \\
&= \frac{\delta_{m+1}}{\gamma} + (1-\gamma) \frac{\delta_{m+1}}{\gamma} \\ 
&= \left( \frac{2}{\gamma} -1 \right)\delta_{m+1} \\
&\leq 3\delta_{m+1}.
\end{align*}
This completes the proof.
\end{proof}

\subsection{Proof of Theorem \ref{thm:3.4}}
\label{subsec:9.4}
The following proof uses the technique presented in \citep{Elad2016OnG}.
\begin{proof}
According to $\delta_{m+1}:= \frac{\eta_{m+1}C_{\alpha}^{1/\alpha}\tau}{b_{m+1}^{(\alpha-1)/\alpha}}$ and $\frac{\kappa_m}{\lambda_m^{(\alpha-1)/\alpha}} = \gamma$, we have
\begin{align*}
\delta_{m+1} 
:= \frac{\eta_{m+1}C_{\alpha}^{1/\alpha}\tau}{b_{m+1}^{(\alpha-1)/\alpha}}
=\frac{\kappa_m \eta_m C}{(\lambda_m b_m)^{(\alpha-1)/\alpha}}
=\frac{\kappa_m}{\lambda_m^{(\alpha-1)/\alpha}}\delta_m
=\gamma \delta_m.
\end{align*}
Therefore, from $M := \log_{\gamma}(\beta_0\epsilon)$,
\begin{align*}
\delta_M
= \delta_1 \gamma^{M-1}
=\frac{\delta_1 \beta_0 \epsilon}{\gamma}.
\end{align*}
According to Theorem \ref{thm:03}, 
\begin{align*}
\mathbb{E}\left[ \hat{f}_{\delta_M}(\bm{x}_{M+1}) - \hat{f}_{\delta_M}(\bm{x}_{\delta_M}^\star) \right]
&\leq \epsilon_M := \frac{\mu_2\delta_M^2}{2} = \frac{\mu_2 \delta_1^2 \beta_0^2 \epsilon^2}{2\gamma^2}.
\end{align*}
From Lemmas \ref{lem:04} and \ref{lem:06}, 
\begin{align*}
\mathbb{E}\left[ f(\bm{x}_{M+1}) - f(\bm{x}^\star) \right]
&=\mathbb{E}\left[ \left\{f(\bm{x}_{M+1}) - \hat{f}_{\delta_M}(\bm{x}_{M+1})\right\} + \left\{\hat{f}_{\delta_M}(\bm{x}^\star) - f(\bm{x}^\star)\right\} + \left\{\hat{f}_{\delta_M}(\bm{x}_{M+1}) - \hat{f}_{\delta_M}(\bm{x}^\star) \right\}\right] \\
&\leq \mathbb{E}\left[ \left\{f(\bm{x}_{M+1}) - \hat{f}_{\delta_M}(\bm{x}_{M+1})\right\} + \left\{\hat{f}_{\delta_M}(\bm{x}^\star) - f(\bm{x}^\star)\right\} + \left\{\hat{f}_{\delta_M}(\bm{x}_{M+1}) - \hat{f}_{\delta_M}(\bm{x}_{\delta_M}^\star)\right\}\right] \\
&\leq 2\delta_M L_f + \epsilon_M \\
&\leq \left( \frac{2L_f\delta_1 \beta_0}{\gamma} + \frac{\mu_2\delta_1^2\beta_0^2}{2\gamma^2} \right)\epsilon \\
&\leq \epsilon,
\end{align*}
where we have used $\beta_0 \leq \min\left\{ \frac{\gamma}{4L_f\delta_1}, \frac{\gamma}{\sqrt{\mu_2}\delta_1} \right\}$.\\
Let $T_{\text{total}}$ be the total number of queries made by Algorithm \ref{alg:gnc2}; then,
\begin{align*}
T_{\text{total}} 
&= \sum_{m=1}^{M} \frac{H_m}{\epsilon_m - I_m}
\geq \sum_{m=1}^{M} \frac{H_m}{\epsilon_m}
= \sum_{m=1}^{M} \frac{2L_f^2}{\mu_1\mu_2^2\eta_m\delta_m^2}
\geq \frac{2L_f^2}{\mu_1\mu_2^2\eta_1}\sum_{m=1}^{M} \frac{1}{\delta_m^2}.
\end{align*}
Here, from $\delta_{m+1} := \gamma \delta_m$ and $M := \log_{\gamma} \beta_0 \epsilon$,
\begin{align*}
\sum_{m=1}^{M}\frac{1}{\delta_m^2}
= \sum_{m=1}^{M} \frac{1}{\delta_1^2 \gamma^{2(m-1)}}
= \frac{1}{\delta_1^2}\sum_{m=0}^{M-1} \frac{1}{\gamma^{2m}}
= \frac{1}{\delta_1^2} \cdot \frac{\left(\frac{1}{\gamma^2}\right)^{M-1} - \gamma^2}{1-\gamma^2}
= \frac{\gamma^2}{\delta_1^2} \cdot \frac{\left( \frac{1}{\beta_0 \epsilon}\right)^2 - 1}{1-\gamma^2}.
\end{align*}
Therefore, we have
\begin{align*}
T_{\text{total}} 
\geq \frac{2L_f^2 \gamma^2}{\mu_1\mu_2^2\eta_1\delta_1^2(1-\gamma^2)} \left\{ \left( \frac{1}{\beta_0 \epsilon}\right)^2 - 1\right\}
= \Omega\left( \frac{1}{\epsilon^2}\right).
\end{align*}
This completes the proof.
\end{proof}

\section{Numerical Results}
\label{sec:exp}
This section presents the results of numerical experiments to support the theory.
The experimental environment was as follows: NVIDIA GeForce RTX 4090$\times$2GPU and Intel Core i9 13900KF CPU. The software environment was Python 3.10.12, PyTorch 2.1.0 and CUDA 12.2. 

\subsection{Numerical experiments on implicit graduated optimization}
We compared four types of SGD for image classification: {\em 1.} constant learning rate and constant batch size, {\em 2.} decaying learning rate and constant batch size, {\em 3.} constant learning rate and increasing batch size, {\em 4.} decaying learning rate and increasing batch size, in training ResNet34 \citep{He2016Dee} on the ImageNet dataset \citep{Deng2009Ima} (Figure \ref{fig:ImageNetApp}), ResNet18 \citep{He2016Dee} on the CIFAR100 dataset \citep{Alex2009Lea} (Figure \ref{fig:01}), and WideResNet-28-10 \citep{Zagoruyko2016Wid} on the CIFAR100 dataset (Figure \ref{fig:02}). Therefore, methods 2, 3, and 4 are our Algorithm \ref{alg:gnc2}. All experiments were run for 200 epochs. In methods 2, 3, and 4, the noise decreased every 40 epochs, with a common decay rate of $1/\sqrt{2}$. That is, every 40 epochs, the learning rate of method 2 was multiplied by $1/\sqrt{2}$, the batch size of method 3 was doubled, and the learning rate and batch size of method 4 were respectively multiplied by $\sqrt{3}/2$ and $1.5$. Note that this $1/\sqrt{2}$ decay rate is $\gamma$ in Algorithm \ref{alg:gnc2} and it satisfies the condition in Proposition \ref{prop:999}. The initial learning rate was 0.1 for all methods, which was determined by performing a grid search among $\left[0.01, 0.1, 1.0, 10\right]$. The noise reduction interval was every 40 epochs, which was determined by performing a grid search among $\left[10, 20, 25, 40, 50, 100\right]$. A history of the learning rate or batch size for each method is provided in the caption of each figure.

\begin{figure}[h]
\begin{tabular}{cc}
\begin{minipage}[t]{0.49\hsize}
\centering
\includegraphics[width=1\textwidth]{./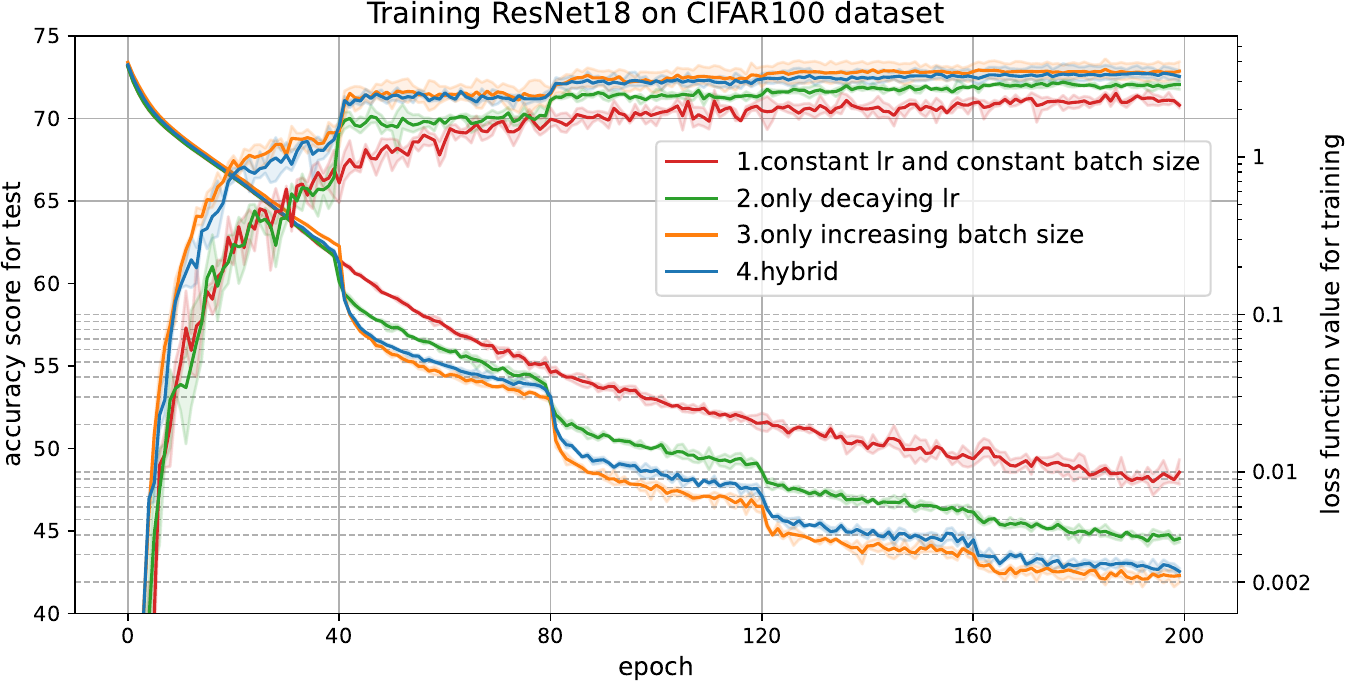}
\end{minipage} 
\begin{minipage}[t]{0.49\hsize}
\centering
\includegraphics[width=1\textwidth]{./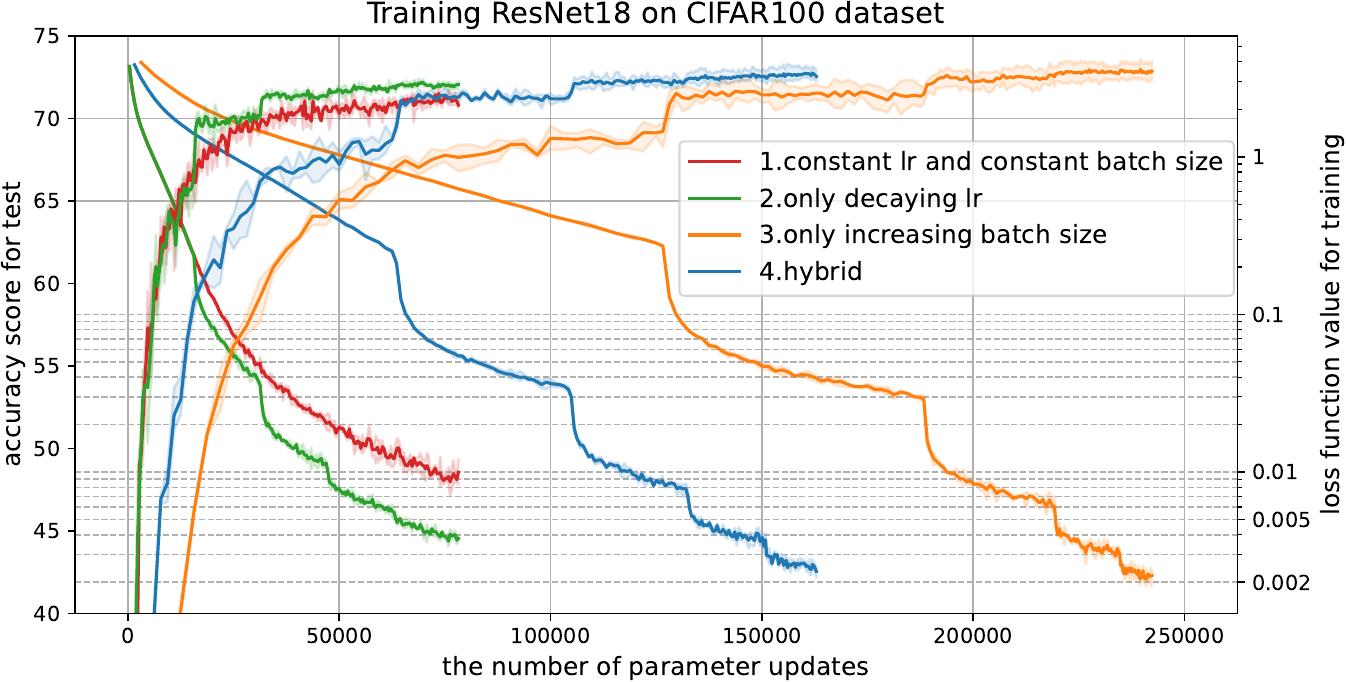}
\end{minipage}
\end{tabular}
\caption{Accuracy score for testing and loss function value for training versus the number of epochs (\textbf{left}) and the number of parameter updates (\textbf{right}) in training ResNet18 on the CIFAR100 dataset. The solid line represents the mean value, and the shaded area represents the maximum and minimum over three runs. In method 1, the learning rate and the batch size were fixed at 0.1 and 128, respectively. In method 2, the learning rate decreased every 40 epochs as $\left[0.1, \frac{1}{10\sqrt{2}}, 0.05, \frac{1}{20\sqrt{2}}, 0.025\right]$ and the batch size was fixed at 128. In method 3, the learning rate was fixed at 0.1, and the batch size was increased as $\left[16, 32, 64, 128, 256\right]$. In method 4, the learning rate was decreased as $\left[0.1, \frac{\sqrt{3}}{20}, 0.075, \frac{3\sqrt{3}}{80}, 0.05625\right]$ and the batch size was increased as $\left[32, 48, 72, 108, 162\right]$.}
\label{fig:01}
\end{figure}

\begin{figure}[h]
\begin{tabular}{cc}
\begin{minipage}[t]{0.49\hsize}
\centering
\includegraphics[width=1\textwidth]{./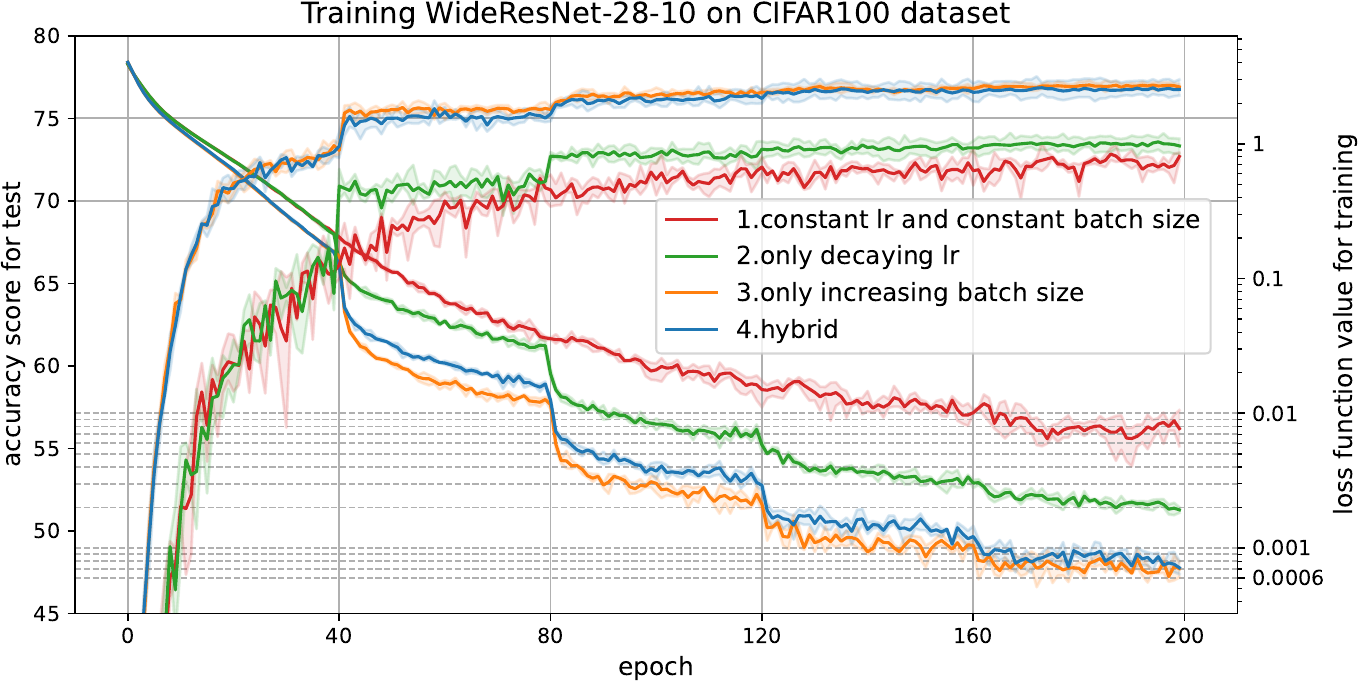}
\end{minipage} 
\begin{minipage}[t]{0.49\hsize}
\centering
\includegraphics[width=1\textwidth]{./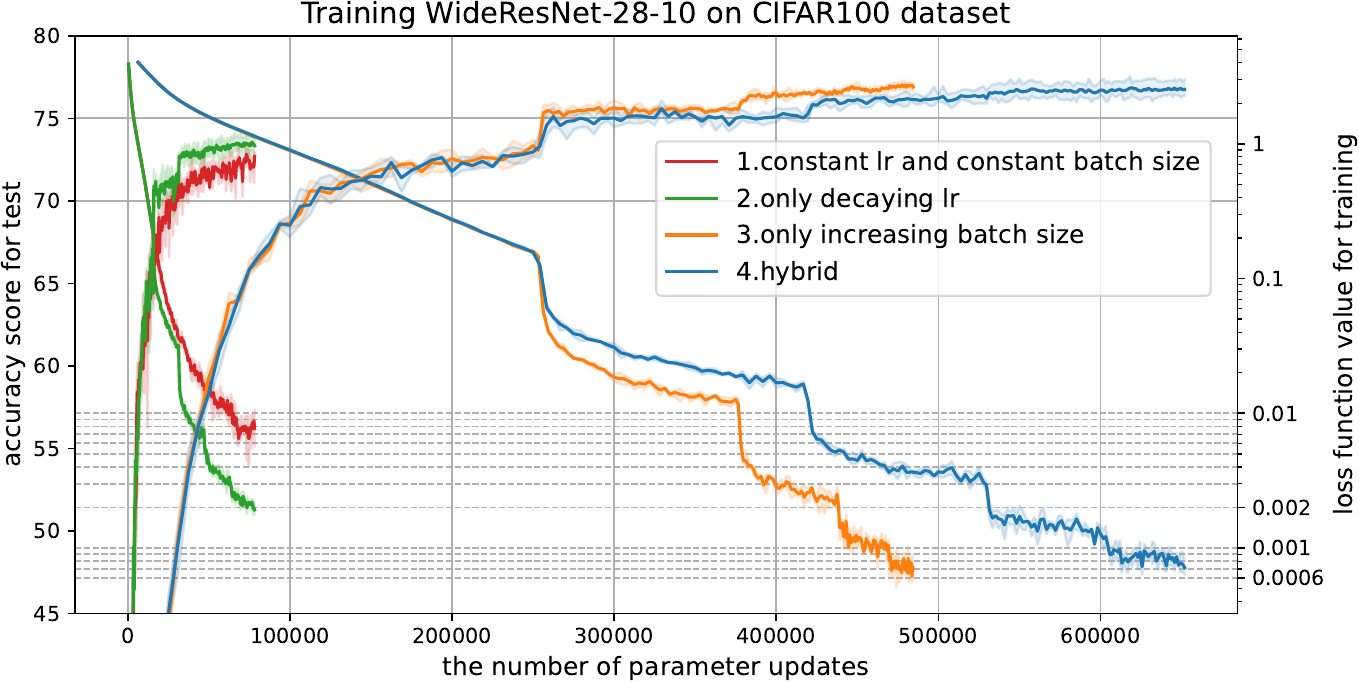}
\end{minipage}
\end{tabular}
\caption{Accuracy score for testing and loss function value for training versus the number of epochs (\textbf{left}) and the number of parameter updates (\textbf{right}) in training WideResNet-28-10 on the CIFAR100 dataset. The solid line represents the mean value, and the shaded area represents the maximum and minimum over three runs. In method 1, the learning rate and batch size were fixed at 0.1 and 128, respectively. In method 2, the learning rate was decreased every 40 epochs as $\left[0.1, \frac{1}{10\sqrt{2}}, 0.05, \frac{1}{20\sqrt{2}}, 0.025\right]$ and the batch size was fixed at 128. In method 3, the learning rate was fixed at 0.1, and the batch size was increased as $\left[8, 16, 32, 64, 128\right]$. In method 4, the learning rate was decreased as $\left[0.1, \frac{\sqrt{3}}{20}, 0.075, \frac{3\sqrt{3}}{80}, 0.05625\right]$ and the batch size was increased as $\left[8, 12, 18, 27, 40\right]$.}
\label{fig:02}
\end{figure}

\begin{figure}[h]
\begin{tabular}{cc}
\begin{minipage}[t]{0.49\hsize}
\centering
\includegraphics[width=1\linewidth]{./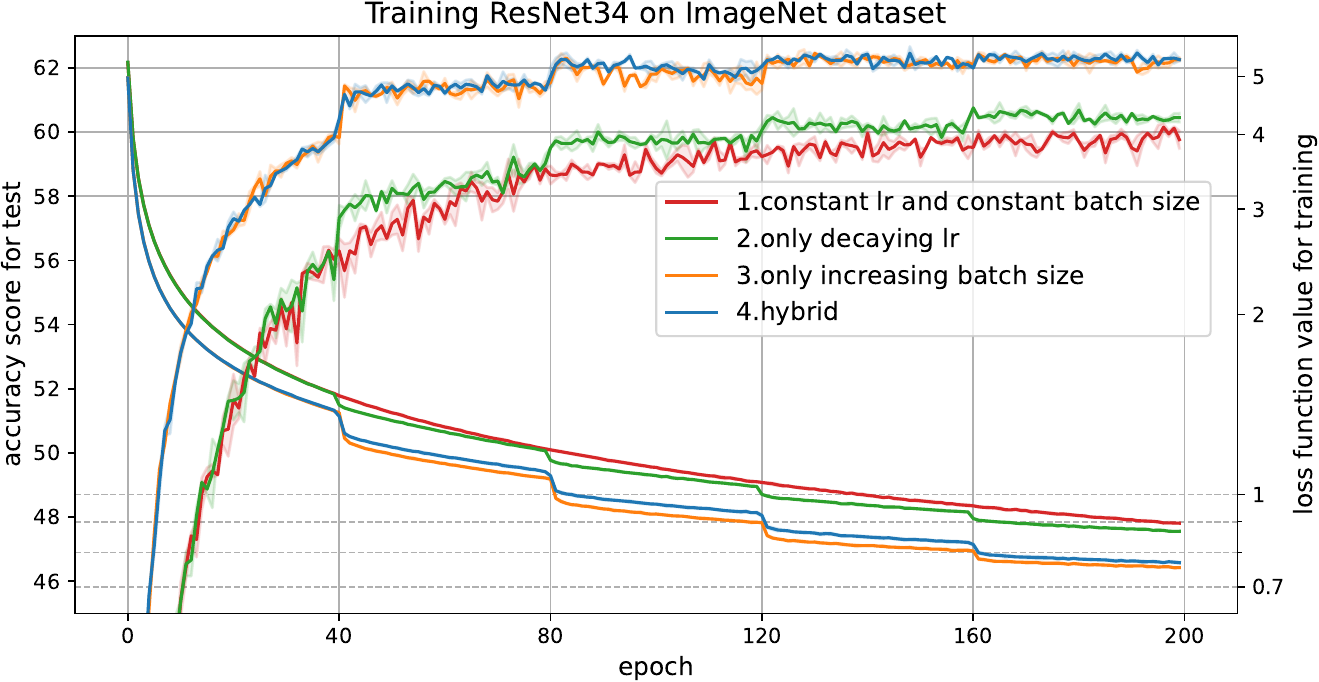}%
\end{minipage} 
\begin{minipage}[t]{0.49\hsize}
\centering
\includegraphics[width=1\textwidth]{./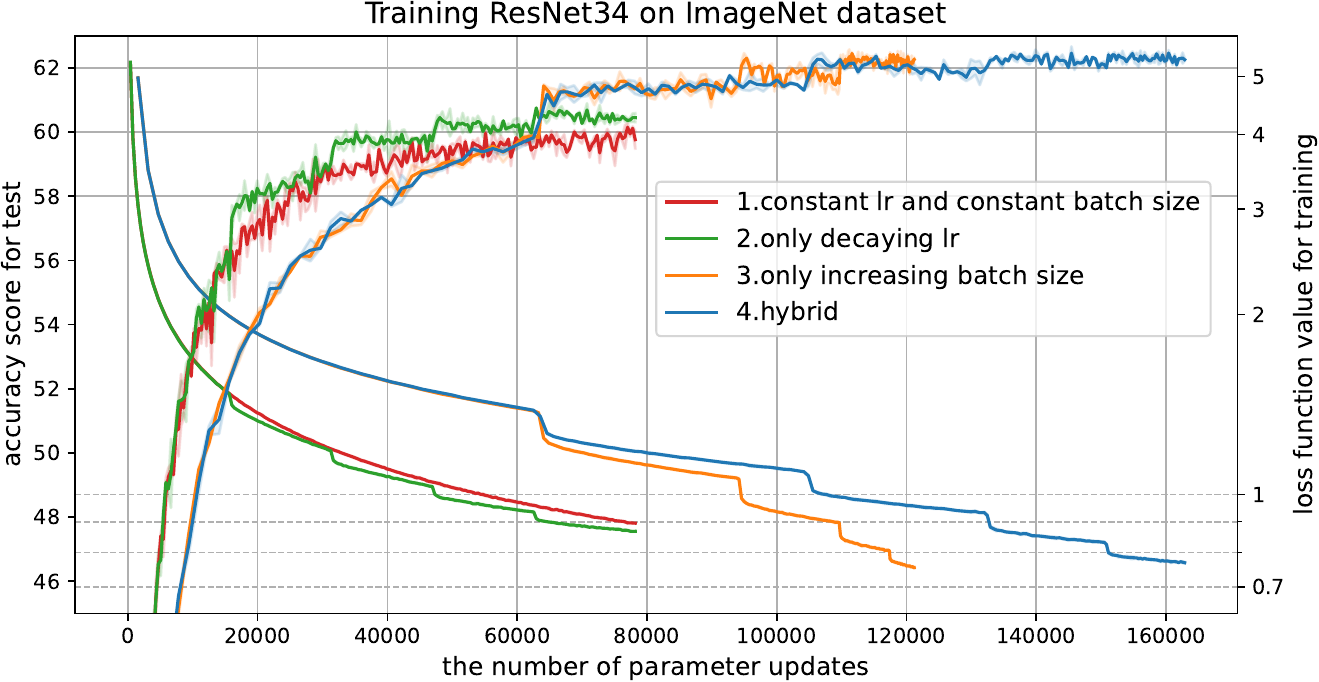}%
\end{minipage}
\end{tabular}
\caption{Accuracy score for the testing and loss function value for training versus the number of epochs (\textbf{left}) and the number of parameter updates (\textbf{right}) in training ResNet34 on the ImageNet dataset. The solid line represents the mean value, and the shaded area represents the maximum and minimum over three runs. In method 1, the learning rate and batch size were fixed at 0.1 and 256, respectively. In method 2, the learning rate was decreased every 40 epochs as $\left[0.1, \frac{1}{10\sqrt{2}}, 0.05, \frac{1}{20\sqrt{2}}, 0.025\right]$ and the batch size was fixed at 256. In method 3, the learning rate was fixed at 0.1, and the batch size was increased as $\left[32, 64, 128, 256, 512\right]$. In method 4, the learning rate was decreased as $\left[0.1, \frac{\sqrt{3}}{20}, 0.075, \frac{3\sqrt{3}}{80}, 0.05625\right]$ and the batch size was increased as $\left[32, 48, 72, 108, 162\right]$.}
\label{fig:ImageNetApp}
\end{figure}

For methods 2, 3, and 4, the decay rates are all $1/\sqrt{2}$, and the decay intervals are all 40 epochs, so throughout the training, the three methods should theoretically be optimizing the exact same five smoothed functions in sequence. Nevertheless, the local solutions reached by each of the three methods are not exactly the same. All results indicate that method 3 is superior to method 2 and that method 4 is superior to method 3 in both test accuracy and training loss function values. This difference can be attributed to the different learning rates used to optimize each smoothing function. Among methods 2, 3, and 4, method 3, which does not decay the learning rate, maintains the highest learning rate 0.1, followed by method 4 and method 2. In all graphs, the loss function values are always small in that order; i.e., the larger the learning rate is, the lower loss function values become. Therefore, we can say that the noise level $\delta$, expressed as $\frac{\eta C}{\sqrt{b}}$, needs to be reduced, while the learning rate $\eta$ needs to remain as large as possible. Alternatively, if the learning rate is small, then a large number of iterations are required. Thus, for the same rate of change and the same number of epochs, an increasing batch size is superior to a decreasing learning rate because it can maintain a large learning rate and can be made to iterate a lot when the batch size is small.

Theoretically, the noise level $\delta_m$ should gradually decrease and become zero at the end, so in our algorithm \ref{alg:gnc2}, the learning rate $\eta_m$ should be zero at the end or the batch size $b_m$ should match the number of data sets at the end. However, if the learning rate is 0, training cannot proceed, and if the batch size is close to a full batch, it is not feasible from a computational point of view. For this reason, the experiments described in this paper are not fully graduated optimizations; i.e., full global optimization is not achieved. In fact, the last batch size used by method 2 is around 128 to 512, which is far from a full batch. Therefore, the solution reached in this experiment is the optimal one for a function that has been smoothed to some extent, and to achieve a global optimization of the DNN, it is necessary to increase only the batch size to eventually reach a full batch, or increase the number of iterations accordingly while increasing the batch size and decaying the learning rate.

\subsection{Relationship between degree of smoothing, sharpness, and generalizability}
\label{sec:331}
The graduated optimization algorithm is a method in which the degree of smoothing $\delta$ is gradually decreased. Let us consider the case where the degree of smoothing $\delta$ is constant throughout the training. The following lemma shows the relationship between the error of the original function value and that of the smoothed function value.

\begin{lem}\label{lem:06}
Let $\hat{f}_\delta$ be the smoothed version of $f$; then, for all $\bm{x} \in \mathbb{R}^d$,
$
\left|\hat{f}_\delta (\bm{x}) - f(\bm{x})\right|
\leq \delta L_f.
$
\end{lem}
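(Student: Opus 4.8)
The plan is to unfold the definition of the smoothed function and bound the difference directly using the Lipschitz assumption (A2), so the whole argument is a short chain of inequalities rather than anything requiring a clever construction. First I would recall from Definition \ref{dfn:fhat} that $\hat{f}_\delta(\bm{x}) = \mathbb{E}_{\bm{u} \sim \mathcal{L}}[f(\bm{x} - \delta \bm{u})]$, and observe that $f(\bm{x})$ does not depend on $\bm{u}$, so it equals $\mathbb{E}_{\bm{u} \sim \mathcal{L}}[f(\bm{x})]$. This lets me write the difference as a single expectation:
\begin{align*}
\left| \hat{f}_\delta(\bm{x}) - f(\bm{x}) \right|
= \left| \mathbb{E}_{\bm{u} \sim \mathcal{L}} \left[ f(\bm{x} - \delta \bm{u}) - f(\bm{x}) \right] \right|.
\end{align*}

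Next I would push the absolute value inside the expectation by Jensen's inequality (equivalently, the triangle inequality for integrals), giving the upper bound $\mathbb{E}_{\bm{u} \sim \mathcal{L}}[ | f(\bm{x} - \delta \bm{u}) - f(\bm{x}) | ]$. Then I apply assumption (A2), the $L_f$-Lipschitz property, to each integrand with the two points $\bm{x} - \delta \bm{u}$ and $\bm{x}$, which yields $|f(\bm{x}-\delta\bm{u}) - f(\bm{x})| \le L_f \| \delta \bm{u} \| = \delta L_f \| \bm{u} \|$. Substituting this in and pulling the constant $\delta L_f$ out of the expectation leaves $\delta L_f \, \mathbb{E}_{\bm{u} \sim \mathcal{L}}[\|\bm{u}\|]$.

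The final step uses the normalization built into Definition \ref{dfn:fhat}, namely $\mathbb{E}_{\bm{u} \sim \mathcal{L}}[\|\bm{u}\|] \le 1$, to conclude $\left| \hat{f}_\delta(\bm{x}) - f(\bm{x}) \right| \le \delta L_f$, as claimed. I do not anticipate a genuine obstacle here: the result is essentially a one-line consequence of Lipschitz continuity together with the first-moment bound on the smoothing distribution, and since the bound holds pointwise in $\bm{x}$ for every realization of $\bm{u}$, no light-tail assumption beyond the stated $\mathbb{E}_{\bm{u} \sim \mathcal{L}}[\|\bm{u}\|] \le 1$ is needed. The only point to state carefully is the interchange justifying Jensen's inequality, which is immediate because $|\cdot|$ is convex and the expectation is well defined under (A2).
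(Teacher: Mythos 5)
Your proposal is correct and follows exactly the same route as the paper's proof: rewrite the difference as a single expectation, apply the triangle (Jensen) inequality, invoke the $L_f$-Lipschitz assumption (A2), and conclude with the moment bound $\mathbb{E}_{\bm{u} \sim \mathcal{L}}\left[ \| \bm{u} \| \right] \leq 1$ from Definition \ref{dfn:fhat}. No gaps; nothing further is needed.
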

\begin{proof}
From Definition \ref{dfn:newfhat} and Assumption \ref{assum:02}, we have, for all $\bm{x}, \bm{y} \in \mathbb{R}^d$,
\begin{align*}
\left|\hat{f}_\delta(\bm{x}) - f(\bm{x}) \right|
&=\left| \mathbb{E}_{\bm{u}}\left[f(\bm{x}-\delta \bm{u})\right] - f(\bm{x}) \right| \\
&=\left| \mathbb{E}_{\bm{u}} \left[f(\bm{x} - \delta \bm{u}) - f(\bm{x}) \right] \right| \\
&\leq \mathbb{E}_{\bm{u}} \left[\left| f(\bm{x} - \delta \bm{u}) - f(\bm{x}) \right|\right] \\
&\leq \mathbb{E}_{\bm{u}} \left[L_f \| (\bm{x} - \delta \bm{u}) - \bm{x} \|\right] \\
&= \delta L_f \mathbb{E}_{\bm{u}} \left[ \| \bm{u} \| \right] \\
&\leq \delta L_f.
\end{align*}
This completes the proof.
\end{proof}
Here, a larger degree of smoothing should be necessary to make many local optimal solutions of the objective function $f$ disappear and lead the optimizer to the global optimal solution. On the other hand, Lemma \ref{lem:06} implies that the larger the degree of smoothing is, the further away the smoothed function will be from the original function. Therefore, there should be an optimal value for the degree of smoothing that balances the tradeoffs, because if the degree of smoothing is too large, the original function is too damaged and thus cannot be optimized properly, and if it is too small, the function is not smoothed enough and the optimizer falls into a local optimal solution. This knowledge is useful because the degree of smoothing due to stochastic noise in SGD is determined by the learning rate and batch size (see Section \ref{sec:3.2}), so when a constant learning rate and constant batch size are used, the degree of smoothing is constant throughout the training.

The smoothness of the function, and in particular the sharpness of the function around the approximate solution to which the optimizer converged, has been well studied because it has been thought to be related to the generalizability of the model. In this section, we reinforce our theory by experimentally observing the relationship between the degree of smoothing and the sharpness of the function.

Several previous studies \citep{Hochreiter1997Fla, Shirish2017OnL, Izmaliov2018Ave, Li2018Vis, Andriushchenko2023AMo} have addressed the relationship between the sharpness of the function around the approximate solution to which the optimizer converges and the generalization performance of the model. In particular, the hypothesis that flat local solutions have better generalizability than sharp local solutions is at the core of a series of discussions, and several previous studies \citep{Shirish2017OnL, Liang2019Fis, Tsuzuku2020Nor, Petzka2021Rel, Kwon2021ASA} have developed measures of sharpness to confirm this. In this paper, we use ``adaptive sharpness'' \citep{Kwon2021ASA, Andriushchenko2023AMo} as a measure of the sharpness of the function that is invariant to network reparametrization, highly correlated with generalization, and generalizes several existing sharpness definitions. In accordance with \citep{Andriushchenko2023AMo}, let $\mathcal{S}$ be a set of training data; for arbitrary model weights $\bm{w} \in \mathbb{R}^d$, the worst-case adaptive sharpness with radius $\rho \in \mathbb{R}$ and with respect to a vector $\bm{c} \in \mathbb{R}^d$ is defined as
\begin{align*}
S_{\text{max}}^\rho (\bm{w}, \bm{c}) := \mathbb{E}_{\mathcal{S}}\left[ \max_{\| \bm{\delta} \odot \bm{c}^{-1} \|_p \leq \rho} f(\bm{w} + \bm{\delta}) - f(\bm{w}) \right],
\end{align*}
where $\odot /^{-1}$ denotes elementwise multiplication/inversion. Thus, the larger the sharpness value is, the sharper the function around the model weight $\bm{w}$ becomes, with a smaller sharpness leading to higher generalizability.

\begin{figure*}[t]
\begin{minipage}[t]{1\hsize}
\centering
\includegraphics[width=1\linewidth]{./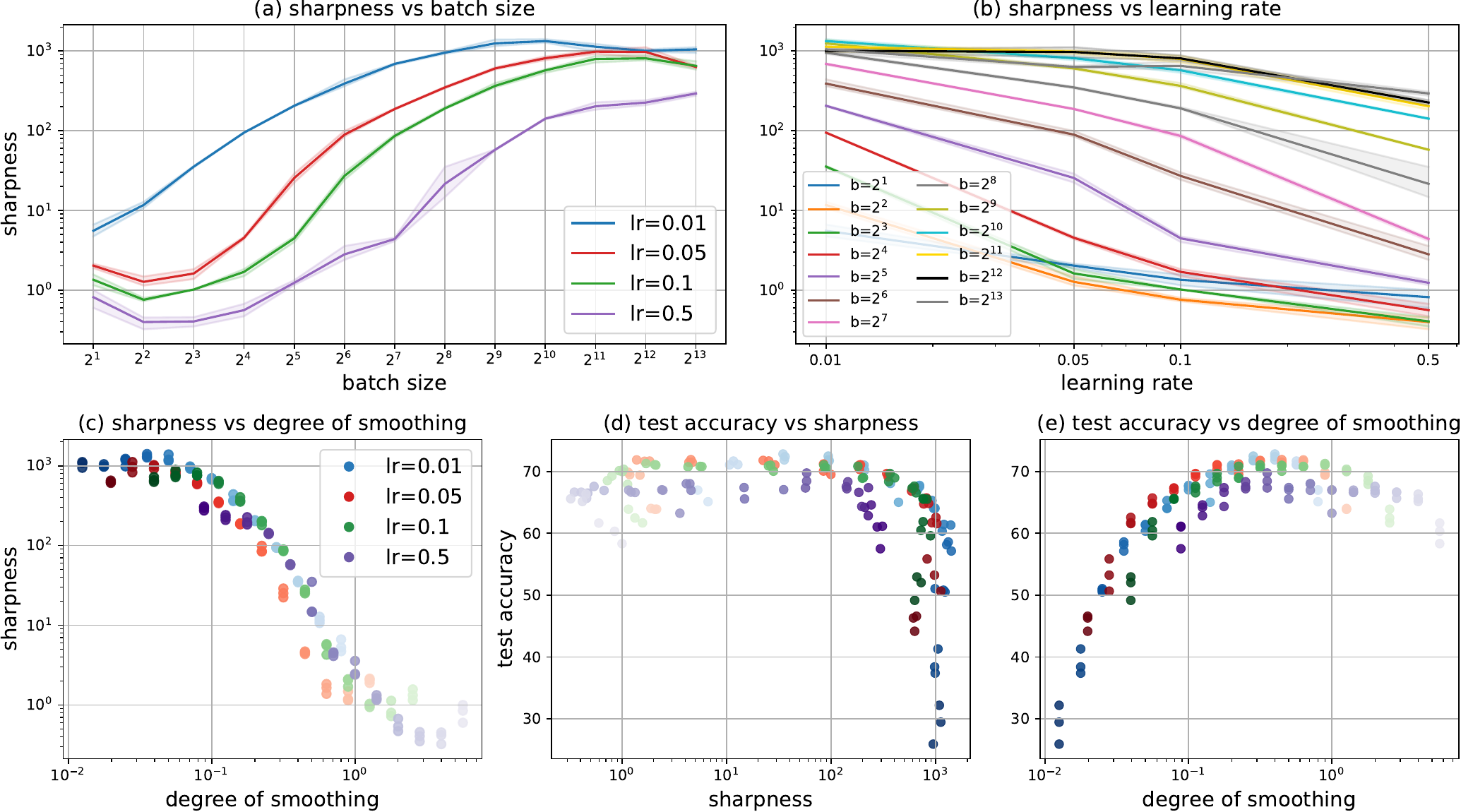}%
\caption{\textbf{(a)} Sharpness around the approximate solution after 200 epochs of ResNet18 training on the CIFAR100 dataset versus batch size used. \textbf{(b)} Sharpness versus learning rate used. \textbf{(c)} Sharpness versus degree of smoothing calculated from learning rate, batch size and estimated variance of the stochastic gradient. \textbf{(d)} Test accuracy after 200 epochs training versus sharpness. \textbf{(e)} Test accuracy versus degree of smoothing. The solid line represents the mean value, and the shaded area represents the maximum and minimum over three runs. The color shade in the scatter plots represents the batch size; the larger the batch size, the darker the color of the plotted points. ``lr'' means learning rate. The experimental results that make up the all graphs are all identical.}
\label{fig:999}
\end{minipage}
\end{figure*}

We trained ResNet18 with the learning rate $\eta \in \{0.01, 0.05, 0.1, 0.1 \}$ and batch size $b \in \{2^1, \ldots, 2^{13}\}$ for 200 epochs on the CIFAR100 dataset and then measured the worst-case $l_{\infty}$ adaptive sharpness of the obtained approximate solution with radius $\rho = 0.0002$ and $\bm{c} = (1, 1, \ldots, 1)^\top \in \mathbb{R}^d$. Our implementation was based on \citep{Andriushchenko2023AMo} and the code used is available on our GitHub. Figure \ref{fig:999} plots the relationship between measured sharpness and the batch size $b$ and the learning rate $\eta$ used for training as well as the degree of smoothing $\delta$ calculated from them. Figure \ref{fig:999} also plots the relationship between test accuracy, sharpness, and degree of smoothing. Three experiments were conducted per combination of learning rate and batch size, with a total of 156 data plots. The variance of the stochastic gradient $C^2$ included in the degree of smoothing $\delta = \eta C/\sqrt{b}$ used values estimated from theory and experiment (see Appendix \ref{sec:CCC} for details).

Figure \ref{fig:999} (a) shows that the larger the batch size is, the larger the sharpness value becomes, whereas (b) shows that the larger the learning rate is, the smaller the sharpness becomes, and (c) shows a greater the degree of smoothing for a smaller sharpness. These experimental results guarantee the our theoretical result that the degree of smoothing $\delta$ is proportional to the learning rate $\eta$ and inversely proportional to the batch size $b$, and they reinforce our theory that the quantity $\eta C/\sqrt{b}$ is the degree of smoothing of the function. Figure \ref{fig:999} (d) also shows that there is no clear correlation between the generalization performance of the model and the sharpness around the approximate solution. This result is also consistent with previous study \citep{Andriushchenko2023AMo}. On the other hand, Figure \ref{fig:999} (e) shows an excellent correlation between generalization performance and the degree of smoothing; generalization performance is clearly a concave function with respect to the degree of smoothing. Thus, a degree of smoothing that is neither too large nor too small leads to high generalization performance. This experimental observation can be supported theoretically (see Lemma \ref{lem:06}). That is, if the degree of smoothing is a constant throughout the training, then there should be an optimal value for the loss function value or test accuracy; for the training of ResNet18 on the CIFAR100 dataset, for example, 0.1 to 1 was the desired value (see Figure \ref{fig:999} (e)). For degrees of smoothing smaller than 0.1, the generalization performance is not good because the function is not sufficiently smoothed so that locally optimal solutions with sharp neighborhoods do not disappear, and the optimizer falls into this trap. On the other hand, a degree of smoothing greater than 1 leads to excessive smoothing and smoothed function becomes too far away from the original function to be properly optimized; the generalization performance is not considered excellent. In addition, the optimal combination of learning rate and batch size that practitioners search for by using grid searches or other methods when training models can be said to be a search for the optimal degree of smoothing. If the optimal degree of smoothing can be better understood, the huge computational cost of the search could be reduced.

\citet{Andriushchenko2023AMo} observed the relationship between sharpness and generalization performance in extensive experiments and found that they were largely uncorrelated, suggesting that the sharpnesss may not be a good indicator of generalization performance and that one should avoid blanket statements like ``flatter minima generalize better''. Figure \ref{fig:999} (d) and (e) show that there is no correlation between sharpness and generalization performance, as in previous study, while there is a correlation between degree of smoothing and generalization performance. Therefore, we can say that degree of smoothing may be a good indicator to theoretically evaluate generalization performance, and it may be too early to say that ``flatter minima generalize better'' is invalid. 

\subsection{Estimation of variance of stochastic gradient}
\label{sec:CCC}
In Section \ref{sec:331}, we need to estimate the variance $C^2$ of the stochastic gradient in order to plot the degree of smoothing $\delta = \eta C/\sqrt{b}$. In general, this is difficult to measure, but several previous studies \citep{Imaizumi2024Ite} have provided the following estimating formula. For some $\epsilon > 0$, when training until $\frac{1}{T}\sum_{k=1}^{T} \mathbb{E}\left[ \| \nabla f(\bm{x}_k) \|^2 \right] \leq \epsilon^2$, the variance of the stochastic gradient can be estimated as 
\begin{align*}
C^2 < \frac{b^\star \epsilon^2}{\eta},
\end{align*}
where $b^\star$ is the batch size that minimizes the amount of computation required for training and $\eta$ is learning rate used in training. We determined the stopping condition $\epsilon$ for each learning rate, measured the batch size that minimized the computational complexity required for the gradient norm of the preceding $t$ steps at time $t$ to average less than $\epsilon$ in training ResNet18 and WideResNet(WRN)-28-10 on the CIFAR100 dataset, and estimated the variance of the stochastic gradient by using an estimation formula (see Table \ref{tab:777}). 

\begin{table}[ht]
\begin{minipage}[t]{0.49\textwidth}
\begin{tabular}{cccc}
 \toprule
 $\eta$ & $\epsilon$ & $b^\star$ & $C^2$ \\
 \midrule
 0.01 & 1.0 & $2^7$ & 12800 \\
 0.05 & 0.5 & $2^9$ & 1280 \\
 0.1 & 0.5 & $2^{10}$ & 1280 \\
 0.5 & 0.5 & $2^{10}$ & 256 \\
 \bottomrule
\end{tabular}
\end{minipage}%
\hspace*{0.02\hsize}
\begin{minipage}[t]{0.49\hsize}
\begin{tabular}{cccc}
 \toprule
 $\eta$ & $\epsilon$ & $b^\star$ & $C^2$ \\
 \midrule
 0.01 & 1.0 & $2^2$ & 400 \\
 0.05 & 0.5 & $2^2$ & 20 \\
 0.1 & 0.5 & $2^{2}$ & 10 \\
 0.5 & 0.5 & $2^{2}$ & 2 \\
 \bottomrule
\end{tabular}
\end{minipage}
\caption{Learning rate $\eta$ and threshold $\epsilon$ used for training, measured optimal batch size $b^\star$ and estimated variance of the stochastic gradient $C^2$ in training ResNet18 (a) and WideResNet-28-10 (b) on the CIFAR100 dataset.}
\label{tab:777}
\end{table}

\end{document}